\pdfoutput=1 
\documentclass[11pt]{article}
\usepackage{amsmath,amssymb,amsthm,mathtools}
\usepackage{dsfont}             
\usepackage[margin=1in]{geometry}
\usepackage[hidelinks]{hyperref}
\usepackage{xurl}               
\usepackage{microtype}
\usepackage{graphicx}
\usepackage[ruled,vlined,linesnumbered]{algorithm2e}
\newcommand{\tr}{\operatorname{tr}}
\newcommand{\diag}{\operatorname{diag}}
\newcommand{\R}{\mathbb{R}}
\newcommand{\E}{\mathbb{E}}
\newcommand{\Oc}{\mathcal{O}}
\DeclareMathOperator*{\argmax}{arg\,max}

\theoremstyle{plain}
\newtheorem{theorem}{Theorem}
\newtheorem{proposition}{Proposition}
\newtheorem{lemma}{Lemma}
\newtheorem{corollary}{Corollary}
\theoremstyle{definition}
\newtheorem{assumption}{Assumption}
\newtheorem{problem}{Problem}
\theoremstyle{remark}
\newtheorem{remark}{Remark}

\title{\bf How Much Does Correctness Cost?\\
Budgeted Placement of Strong Correctors in a Weak Multi-Agent Swarm}
\author{Igor Itkin\\ Independent Researcher, Tel Aviv, Israel\\
\texttt{ig.itkin@gmail.com}\ \ ORCID 0009-0004-9513-8463}
\date{July 2026}

\begin{document}
\maketitle

\begin{abstract}
A cheap swarm of unreliable agents can be steered to a correct consensus by a few strong, expensive
``oracle'' correctors. We ask how much one must spend, and where to place the oracles. We model the swarm as a
consensus on a graph in which each oracle pins one node toward the truth at a cost-coupled, concave strength,
and measure quality by the \emph{coherence} $H(R)=\tr M(R)^{-1}$. Our first result is that $H$ stays
\emph{submodular} (each added oracle helps less than the last) even when the oracles differ in strength, so a
cost-benefit greedy comes within $1-1/e$ of the best placement at any budget. Inverting the budget gives the \emph{budget--correctness frontier}
$B^\star(\varepsilon)$, the least spend that guarantees an $\varepsilon$-correct consensus: closed-form on the
complete graph, and a minimal oracle count $k^\star$ when oracles cost the same. Whether a budget then buys a
few strong oracles or many medium ones is decided by one scalar, the curvature of the cost--quality law:
diminishing returns favour spreading, sharply increasing returns favour concentration. Measured on the Qwen3
ladder ($0.6$--$32$B), the law is concave for factual and math verification (replicated on Gemma-4) but convex
for emergent code tracing, so the verdict is genuinely task-dependent. Code and data:
\url{https://github.com/YehudaItkin/budgeted-oracle-placement}.
\end{abstract}

\section{Introduction}\label{sec:intro}

Deployed multi-agent large language model (LLM) systems increasingly pair two kinds of agent: a large pool
of cheap, individually unreliable workers, and a few strong, expensive models used as verifiers or
correctors. The strong models cost
more but also correct harder. This raises a budgeting question that the consensus literature has not
asked: \emph{given a fixed budget, how many strong correctors should one buy, and where should they be
placed, to guarantee that the swarm reaches a correct answer?}

Practice already treats agent assembly as budgeted selection: Yuan et al.\ cast the choice of tools and
sub-agents as a knapsack over heterogeneous-cost, heterogeneous-quality components and solve it online
\cite{Yuan2025}. That work optimizes an empirically measured success rate with a generic
online-knapsack competitive ratio, over a static set with no interaction structure. We supply the dynamical
foundation that line of work omits. The agents form a consensus loop on a graph, not a static pool, and the objective is the provable tracking error of that loop, not a
measured success rate. The decision is where on the graph each oracle sits, because a node's worth
depends on its position. Finally, the oracle's cost is coupled to its pinning strength through a law $w(c)$, so buying
more correction costs more. This coupling is what turns placement into a budgeting problem. It yields a $(1-1/e)$ submodular guarantee
and a budget--correctness frontier $B^\star(\varepsilon)$, closed-form on the complete graph and
greedy-computable elsewhere. Its equal-cost limit is the minimal oracle count $k^\star$.

The model and the placement machinery extend our companion analysis of delayed verification
\cite{ItkinDelay}, whose homogeneous, cardinality-constrained corrector placement is the special case
$c_i\equiv1$, $w_i\equiv w$ recovered in Corollary~\ref{cor:kstar}.

\noindent This paper makes five contributions.
\begin{itemize}
  \item \textbf{A cost-coupled corrector model} (Section~\ref{sec:model}): a grounded-Laplacian swarm in
        which an oracle of cost $c$ pins toward truth with concave strength $w(c)$; correctness is the
        coherence bound $H(R)\le\varepsilon$, and the object of interest is the minimal budget
        $B^\star(\varepsilon)$.
  \item \textbf{Submodularity under cost-coupled pins} (Theorem~\ref{thm:submod}): the coherence objective
        \cite{BamiehCoherence} is a classically submodular leader-selection criterion \cite{MackinPatterson}; we show the
        error reduction $\rho(R)=H(\varnothing)-H(R)$ stays monotone submodular under heterogeneous,
        cost-coupled diagonal pins (not identical leaders), via the M-matrix /
        entrywise-nonnegative-inverse argument, which is what turns selection into a budgeted knapsack.
  \item \textbf{A budgeted $(1-1/e)$ placement} (Theorem~\ref{thm:knapsack}): cost-benefit greedy buys an
        error reduction within $1-1/e$ of the budget-optimal, concentrating spend on the
        \emph{resolvent centrality per dollar} of each node.
  \item \textbf{The budget--correctness frontier} (Proposition~\ref{prop:KN}, Theorem~\ref{thm:curv},
        Corollary~\ref{cor:kstar}): a closed-form coherence and frontier $B^\star$ on the complete graph;
        its homogeneous-cost limit is a minimal oracle count $k^\star$; and the curvature of the
        cost--quality law as the scalar that decides few-strong versus many-medium oracles.
  \item \textbf{A high-probability (Stage-B) threshold} (Section~\ref{sec:disc}): the same budgeting question
        on a nonlinear majority cascade has a sharp phase transition whose threshold is a
        degree-weighted influence balance, exact on the complete graph, rigorous on dense random graphs
        (Theorem~\ref{thm:stageB}), and hysteretic on sparse ones. The threshold is extended to fallible
        correctors (Proposition~\ref{prop:fallible}), unified with Stage~A as its near-consensus linearization
        (Remark~\ref{rmk:unif}), and demonstrated on a real LLM swarm.
\end{itemize}

\section{Related work}\label{sec:related}
Budgeted corrector placement sits between three literatures; none gives the cost-coupled,
consensus-correctness object we study.

\emph{Leader and pinning placement on consensus.} The spectral backbone is the grounded Laplacian, whose
smallest eigenvalue governs convergence to pinned nodes \cite{PiraniSundaram}. Leader selection minimizes
a consensus error by choosing a fixed number of leaders \cite{LinFardadJov}, and supermodular structure
gives greedy $(1-1/e)$ guarantees \cite{ClarkSubmod}; the coherence (total steady-state variance) objective
\cite{BamiehCoherence} we use is itself a classically submodular leader-selection criterion
\cite{MackinPatterson},
and maximizing the smallest grounded-Laplacian eigenvalue is the closest spectral objective to ours
\cite{WangLambdaMin}. All fix a count of identical agents. Our increment is to couple pin
strength to cost through a quality law $w(c)$, turning selection into a budgeted knapsack, and
to show submodularity survives that cost-coupled heterogeneity.

\emph{Budgeted submodular maximization.} The greedy $(1-1/e)$ bound \cite{NWF}, its single-knapsack
extension \cite{Sviridenko}, and scalable cost-aware variants \cite{Leskovec,KrauseGuestrin} are
off-the-shelf. We supply the missing piece, a consensus value whose cost-coupled heterogeneous form
is submodular, so the knapsack machinery applies and yields the budget--correctness frontier.

\emph{Resilient consensus and trusted nodes.} A few strong nodes can relax the robustness a network needs
to tolerate faults \cite{Abbas}, building on $r$-robustness and the W-MSR filtering rule \cite{WMSR}. These
give a robustness relaxation rather than a sharp budget threshold, a placement guarantee, or a cost
model. Our Stage-B threshold and Stage-A frontier supply all three.

\emph{Budgeted agent selection in LLM systems.} Practice already budgets agent assembly: a composer agent
picks tools and sub-agents by online knapsack \cite{Yuan2025}, model cascades escalate to a strong model
on low confidence \cite{FrugalGPT}, and a recent analysis finds a continuous-gain synergy phase transition
with closed-form compute-allocation rules \cite{Synergy}. That last is the closest: it shares budgeted
allocation and a sharp threshold, but over per-agent compute on correlated inputs, with no graph or
placement. None of these models a consensus on a graph, a corrector placement, a cost-coupled quality
law, or a provable correctness frontier; our threshold is set by graph position (an influence balance
over node degrees), not by count or per-agent compute alone.

\section{Model}\label{sec:model}

\emph{Swarm and correctors.} Let $G=(V,E)$ be a connected graph on $N$ agents, with Laplacian $L=D-A$
($A$ the adjacency matrix, $D$ the diagonal degree matrix). The agents run a linear consensus toward a
truth value $b^\star$: at each step an agent moves toward the average of its neighbours, while a common
anchor of strength $\kappa>0$ holds the loop in place. An unknown subset of agents instead injects a
bounded bias. Stacking those biases into a vector $g\in\R^N$, the swarm settles at an equilibrium that we
want to sit as close to $b^\star$ as possible.

A corrector, or \emph{oracle}, placed at node $i$ pins that node toward $b^\star$ with strength $w_i>0$ (a
larger $w_i$ pulls harder). Let $e_i$ be the $i$th standard basis vector, so $e_ie_i^\top$ is the matrix
that acts only on node $i$. A placement $R\subseteq\Oc$, chosen from a candidate set $\Oc\subseteq V$, adds
one such pin per oracle:
\begin{equation}\label{eq:Mop}
  M(R)=L+\kappa I+\sum_{i\in R}w_i\,e_ie_i^\top = M_0+W_R,\qquad
  W_R=\diag\!\big(w_i\,\mathds{1}[i\in R]\big).
\end{equation}
Here $M_0=L+\kappa I$ is the uncorrected operator and $W_R$ collects the pins; both are positive definite
for $\kappa>0$. The equilibrium error is then $e_\infty=M(R)^{-1}g$. Averaging over where the fault lands
($g$ zero-mean with covariance $\sigma^2 I$) gives an expected squared error of $\sigma^2\tr M(R)^{-2}$. We
work instead with the closely related and standard \emph{coherence}
\begin{equation}\label{eq:H}
  H(R)=\tr M(R)^{-1},
\end{equation}
which also falls each time an oracle is added. We call the loop \emph{$\varepsilon$-correct} when
$H(R)\le\varepsilon$: the smaller $H$, the better the swarm tracks the truth.

\emph{Cost coupled to quality.} The new ingredient is that a stronger oracle is more expensive. Each candidate $i$ carries a cost
$c_i>0$, and its pinning strength is set by a single law of the cost.

\begin{assumption}[diminishing-returns cost--quality law]\label{ass:wc}
There is a strictly increasing, concave $w:\R_{>0}\to\R_{>0}$ with $w_i=w(c_i)$; e.g.\
$w(c)=\bar w\,(1-e^{-c/c_0})$. Concavity encodes that the marginal pinning strength bought per dollar
decreases with spend.
\end{assumption}

Assumption~\ref{ass:wc} is what makes ``buy a few strong or many medium oracles'' a genuine question:
under a budget, concavity of $w$ trades off against the diminishing returns of placement itself.

\emph{The budgeting problems.} With a budget $B>0$ and the error reduction $\rho(R)=H(\varnothing)-H(R)\ge0$, we study the primal and its
inverse:
\begin{problem}[budgeted error reduction]\label{prob:primal}
$\max_{R\subseteq\Oc}\rho(R)$ subject to $\sum_{i\in R}c_i\le B$.
\end{problem}
\begin{problem}[budget--correctness frontier]\label{prob:dual}
$B^\star(\varepsilon)=\min_{R\subseteq\Oc}\sum_{i\in R}c_i$ subject to $H(R)\le\varepsilon$.
\end{problem}
\noindent Problem~\ref{prob:dual} is the headline object: the least one must spend to guarantee an
$\varepsilon$-correct swarm.

\section{Submodular structure}\label{sec:submod}

Adding oracle $i$ to a placement $R$ lowers the coherence by a \emph{marginal gain} $\Delta_i(R)$, which
the Sherman--Morrison identity puts in closed form:
\begin{equation}\label{eq:marg}
  \Delta_i(R):=H(R)-H(R\cup\{i\})
            =\frac{w_i\,\|M(R)^{-1}e_i\|^2}{1+w_i\,e_i^\top M(R)^{-1}e_i}\ \ge\ 0 ,
\end{equation}
This gain is a centrality score read off the resolvent $M(R)^{-1}$: it is largest at high-leverage
nodes, the amplifiers and bridges, where a single pin removes the most error. The reduction $\rho$ is
\emph{submodular} when these marginal gains only shrink as the placement grows: each added oracle removes less
error than it would have earlier, the set-level analogue of diminishing returns. That is what lets a greedy
come within a constant factor of optimal, and our first result is that letting the oracles differ in strength
$w_i$ does not break it.

\begin{theorem}[submodularity under heterogeneous pins]\label{thm:submod}
Let $w_i>0$ for every $i$. Then $\rho(R)=H(\varnothing)-H(R)$ is monotone non-decreasing and submodular, with
$\rho(\varnothing)=0$; equivalently $\Delta_i(R)\ge\Delta_i(S)\ge0$ for all $R\subseteq S\subseteq\Oc$ and
$i\notin S$.
\end{theorem}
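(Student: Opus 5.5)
The plan is to embed the set function into a smooth function of a continuous pin vector and read submodularity off the signs of its mixed second partial derivatives. The entrywise nonnegativity of the resolvent, which holds because $M$ is an M-matrix, supplies those signs.

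For $t\in\R^N_{\ge0}$ define $M(t)=M_0+\diag(t)$ and $F(t)=\tr M(t)^{-1}$. Then $M(R)=M(t_R)$ with $t_R=\sum_{i\in R}w_i e_i$, so $H(R)=F(t_R)$.

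\textbf{Step 1: the resolvent is entrywise nonnegative.} For every $t\ge0$, $M(t)$ has nonpositive off-diagonal entries, namely $-A_{ij}$. It is also positive definite, since $L\succeq0$, $\kappa>0$ and $\diag(t)\succeq0$. Hence $M(t)$ is a nonsingular M-matrix. By the standard characterization, $M(t)^{-1}\ge0$ entrywise, and therefore $M(t)^{-2}\ge0$ entrywise as well. This is the only structural input the argument needs, and it holds uniformly in $t$, whatever the heterogeneous strengths $w_i$.

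\textbf{Step 2: first and second derivatives.} Write $P=M(t)^{-1}$. Using $\partial_{t_j}P=-Pe_je_j^\top P$, I get
\[
\partial_{t_i}F=-(P^2)_{ii}\le0,
\]
and, differentiating $(P^2)_{ii}$ once more, for $j\ne i$,
\[
\partial_{t_j}(P^2)_{ii}=-2\,P_{ij}\,(P^2)_{ji}\le0 .
\]
The inequality uses Step 1. So the diagonal entries of $M(t)^{-2}$ are coordinatewise non-increasing in every other coordinate $t_j$.

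\textbf{Step 3: integrate to get the marginal gains.} By the fundamental theorem of calculus along the segment from $t_R$ to $t_R+w_ie_i$,
\[
\Delta_i(R)=F(t_R)-F(t_R+w_ie_i)=\int_0^{w_i}\big(M(t_R+se_i)^{-2}\big)_{ii}\,ds .
\]
This agrees with the Sherman--Morrison expression \eqref{eq:marg}. The integrand is a diagonal entry of the square of a symmetric positive definite matrix, so it is positive, which gives $\Delta_i(R)\ge0$. Monotonicity of $\rho$ and $\rho(\varnothing)=0$ follow immediately.

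For $R\subseteq S$ and $i\notin S$, the vector $t_S+se_i$ dominates $t_R+se_i$, and the two differ only in coordinates $j\in S\setminus R$, all of which are $\ne i$. Moving from one to the other along a monotone path therefore only increases coordinates $j\neq i$. By Step 2 the integrand can only decrease along that path, so pointwise in $s$
\[
\big(M(t_S+se_i)^{-2}\big)_{ii}\le \big(M(t_R+se_i)^{-2}\big)_{ii}.
\]
Integrating over $s\in[0,w_i]$ gives $\Delta_i(S)\le\Delta_i(R)$, which is submodularity of $\rho$.

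\textbf{Main obstacle.} The step I expect to need the most care is the sign argument in Step 2. Nonnegativity of $P$ alone does not obviously control a trace of $P^2$. What rescues it is that the cross derivative factors as the product $P_{ij}(P^2)_{ji}$ of two entrywise-nonnegative quantities.

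The heterogeneity of the $w_i$ then costs nothing. Each pin strength only sets an integration length, and the monotonicity is pointwise along the integration path. For that reason I would route the proof through the continuous relaxation rather than manipulate the Sherman--Morrison ratio \eqref{eq:marg} directly.
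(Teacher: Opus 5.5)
Your proposal is correct and follows essentially the same route as the paper. Both relax to continuous pin strengths and use the nonnegativity of the M-matrix inverse (hence of $M^{-2}$) to sign the cross-derivative $2(M^{-1})_{ij}(M^{-2})_{ij}\ge 0$, then compare $\Delta_i(R)$ and $\Delta_i(S)$ by integrating $(M^{-2})_{ii}$ along $w_i$. The differences are cosmetic: you obtain monotonicity from positivity of the integrand rather than from the Loewner order, and you spell out the pointwise comparison along a monotone path, which the paper leaves implicit.
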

\begin{proof}[Proof idea]
Both properties reduce to the sign of the resolvent $M(R)^{-1}$. Monotonicity is the Loewner ordering:
adding a pin gives $M(R\cup\{i\})\succeq M(R)$, hence $H(R\cup\{i\})\le H(R)$. Submodularity holds because
$M(R)$ is a symmetric M-matrix, so its inverse is entrywise nonnegative; this keeps the mixed pin-derivatives
of $H$ nonnegative at every strength, which is why heterogeneous, cost-coupled pins behave like
identical leaders. The computation is carried out in \ref{app:submod}.
\end{proof}

\begin{remark}
The theorem needs only $w_i>0$. This is weaker than Assumption~\ref{ass:wc}, whose concave law
$w_i=w(c_i)$ is a special case, so submodularity holds for every placement problem in this paper. It is also
why the budgeted problem stays tractable: the value $\rho$ is submodular in the set $R$, while the cost
$\sum_{i\in R}c_i$ is modular, so the two interact only through the knapsack constraint. The same statement fails for the worst-case objective
$\lambda_{\min}(M(R))$ and for off-diagonal (edge) interventions, which lack the M-matrix structure; this is
why the contribution is scoped to node placement on the coherence objective. A numerical sweep is
consistent: $0/6000$ submodularity violations on grounded Laplacians, versus ${\sim}170/6000$ once the
M-matrix structure is dropped.
\end{remark}

\section{Budgeted placement}\label{sec:knapsack}

Because $\rho$ is monotone submodular with $\rho(\varnothing)=0$ and the cost $\sum_{i\in R}c_i$ is modular,
Problem~\ref{prob:primal} is a submodular knapsack. Call \emph{cost-benefit greedy} the following procedure of
\cite{Sviridenko,Leskovec}. Enumerate every seed set of at most three oracles; extend each seed by repeatedly
adding the unpinned node of largest ratio $\Delta_i(R)/c_i$ while the budget allows; return the best placement
found. Algorithm~\ref{alg:greedy} states it, and the lazy evaluation of \cite{Leskovec} makes the inner loop
efficient.

\begin{theorem}[$(1-1/e)$ budgeted placement]\label{thm:knapsack}
Let $R^\star_B$ be an optimal solution of Problem~\ref{prob:primal}, and let $R_g$ be the placement returned by
cost-benefit greedy. Then
\begin{equation}\label{eq:knap}
  \rho(R_g)\ \ge\ \big(1-1/e\big)\,\rho(R^\star_B).
\end{equation}
\end{theorem}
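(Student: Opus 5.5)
The plan is to reduce Theorem~\ref{thm:knapsack} to Sviridenko's analysis of partial enumeration plus cost-benefit greedy for monotone submodular maximization under a single knapsack constraint \cite{Sviridenko}. The only problem-specific ingredients are the ones supplied by Theorem~\ref{thm:submod}: $\rho$ is monotone non-decreasing, submodular and normalized ($\rho(\varnothing)=0$). The cost $\sum_{i\in R}c_i$ is modular with $c_i>0$. So Problem~\ref{prob:primal} is exactly the setting of that analysis, and I will either cite it directly or re-derive the argument as below.

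\textbf{Setup.} If $|R^\star_B|\le 3$, the enumeration tries $R^\star_B$ itself as a seed, and the returned placement is at least as good; done. Otherwise, order $R^\star_B=\{o_1,o_2,\dots\}$ greedily by marginal gain: $o_1$ maximizes $\rho(\{o\})$, then $o_2$ maximizes the gain given $\{o_1\}$, and so on. Take the seed $Y=\{o_1,o_2,o_3\}$, which the algorithm enumerates. By submodularity, every later $o_j$ satisfies $\rho(Y\cup\{o_j\})-\rho(Y)\le \rho(Y)/3$. Let $g(S)=\rho(Y\cup S)-\rho(Y)$. This is again monotone submodular and normalized, with residual budget $B'=B-c(Y)$, and the residual optimum $O'=R^\star_B\setminus Y$ is feasible for it.

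\textbf{Greedy phase.} Run cost-benefit greedy from $Y$ and let $S_t$ be its set after $t$ additions. Each step adds the element maximizing $\Delta_i/c_i$, computed by \eqref{eq:marg}. Submodularity and monotonicity give
\[
g(O')-g(S_t)\le \sum_{o\in O'\setminus S_t}\big[g(S_t\cup\{o\})-g(S_t)\big]\le c(O')\,\frac{g(S_{t+1})-g(S_t)}{c_{t+1}}.
\]
Writing $x_t=g(O')-g(S_t)$, this becomes $x_{t+1}\le(1-c_{t+1}/B')x_t$. Now consider the first element $o^\dagger\in O'$ that greedy selects as best ratio but cannot afford; I will run greedy as if it added $o^\dagger$ anyway. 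Its cumulative cost then exceeds $B'$, and $\prod(1-c/B')\le e^{-1}$ yields $g(S_{t}\cup\{o^\dagger\})\ge(1-1/e)g(O')$. Dropping $o^\dagger$ loses at most $\rho(Y)/3\cdot$ in $\rho$-terms, by the seed ordering. If greedy never meets such an element, it ends with no affordable candidate from $O'$ left, and the inequality chain gives the bound directly.

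\textbf{Combining.} $\rho(R_g)\ge\rho(Y)+g(S_t)\ge \rho(Y)+(1-1/e)g(O')-\rho(Y)/3$. Since $g(O')=\rho(R^\star_B)-\rho(Y)$, this is $\ge(1-1/e)\rho(R^\star_B)+\rho(Y)\,(1/e-1/3)\ge(1-1/e)\rho(R^\star_B)$, because $1/e>1/3$. This is exactly why seeds of size three are enumerated.

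\textbf{Main obstacle.} The delicate step is the bookkeeping around the blocked element $o^\dagger$. Greedy may skip unaffordable non-optimal items and continue, so the potential argument must be indexed only over the steps before the first optimal element is rejected. The accounting also has to use the residual budget $B'$ rather than $B$. I would follow Sviridenko's treatment of this step verbatim rather than reinvent it, noting that it uses only monotonicity, submodularity, normalization and modular positive costs, all guaranteed here by Theorem~\ref{thm:submod} and $c_i>0$.
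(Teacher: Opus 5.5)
Your proposal is correct and follows the paper's proof. The paper makes exactly this reduction: Theorem~\ref{thm:submod} gives a normalized monotone submodular $\rho$, the cost is modular, and Sviridenko's partial-enumeration guarantee does the rest, and your sketch faithfully reconstructs that cited argument. It is also more precise than the paper's summary in two places: the $\rho(Y)/3$ bound needs the seed to be the first three elements of the optimum in greedy marginal-gain order (not just its three highest-value oracles), and your blocked-element step needs the discard-and-continue rule you state, not the early return in the primal branch of Algorithm~\ref{alg:greedy}.
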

\begin{proof}
Immediate from Theorem~\ref{thm:submod} and the single-knapsack guarantee for monotone submodular
maximization \cite{Sviridenko}. See \ref{app:knapsack}.
\end{proof}

The selection rule of Theorem~\ref{thm:knapsack} is interpretable: greedy spends on the node of largest
\emph{resolvent centrality per dollar},
\begin{equation}\label{eq:perdollar}
  \frac{\Delta_i(R)}{c_i}
  =\frac{1}{c_i}\cdot\frac{w(c_i)\,\|M(R)^{-1}e_i\|^2}{1+w(c_i)\,e_i^\top M(R)^{-1}e_i},
\end{equation}
which couples two quantities: the node's graph leverage $\|M(R)^{-1}e_i\|^2$, and the cost-efficiency of
strength $w(c_i)/c_i$. The few-strong-versus-many-medium tradeoff lives in that product.

The same loop also traces the frontier: with the stopping rule $H(R)\le\varepsilon$ in place of the budget,
cost-benefit greedy returns $B^\star(\varepsilon)=\sum_{i\in R}c_i$, the least spend that certifies
$\varepsilon$-correctness.

\begin{algorithm}[t]
\DontPrintSemicolon
\KwIn{grounded operator $M_0=L+\kappa I$; candidates $\Oc$; costs $\{c_i\}$; quality law $w$;
      either budget $B$ (primal) or target $\varepsilon$ (frontier).}
\KwOut{placement $R$; for the frontier, $B^\star(\varepsilon)=\sum_{i\in R}c_i$.}
$R\leftarrow\varnothing$;\quad maintain the resolvent $M(R)^{-1}$ (initialised at $M_0^{-1}$)\;
\While{$\Oc\setminus R\neq\varnothing$}{
  \ForEach{$i\in\Oc\setminus R$}{
     $\Delta_i\leftarrow\dfrac{w(c_i)\,\|M(R)^{-1}e_i\|^2}{1+w(c_i)\,e_i^\top M(R)^{-1}e_i}$
       \tcp*{marginal gain, Eq.~\eqref{eq:marg}}
  }
  $i^\star\leftarrow\argmax_{\,i}\ \Delta_i/c_i$
     \tcp*{resolvent centrality per dollar, Eq.~\eqref{eq:perdollar}}
  \lIf{\normalfont primal \textbf{and} $\sum_{j\in R}c_j+c_{i^\star}>B$}{\Return $R$}
  $R\leftarrow R\cup\{i^\star\}$;\quad rank-1 update of $M(R)^{-1}$ (Sherman--Morrison)\;
  \lIf{\normalfont frontier \textbf{and} $H(R)\le\varepsilon$}{\Return $R$}
}
\Return $R$\;
\caption{Cost-benefit greedy placement / budget--correctness frontier. Prepend the Sviridenko
seed-enumeration of size ${\le}3$ for the $(1-1/e)$ guarantee of Theorem~\ref{thm:knapsack}; use lazy
evaluation \cite{Leskovec} for the inner loop.}
\label{alg:greedy}
\end{algorithm}

\section{The budget--correctness frontier}\label{sec:frontier}

Inverting the greedy bound gives the central object, the least spend that certifies correctness:
$B^\star(\varepsilon)$ from Problem~\ref{prob:dual}. On the complete graph it is exactly solvable in
closed form; the same cost-benefit greedy computes it on any other graph.

\begin{proposition}[exact coherence on the complete graph]\label{prop:KN}
On $K_N$, write $M(R)=\Lambda-\mathbf 1\mathbf 1^\top$ with $\Lambda=\diag(\delta_i)$ and
$\delta_i=N+\kappa+w_i\,\mathds 1[i\in R]$. Then the coherence has the closed form
\begin{equation}\label{eq:HKN}
  H(R)=S_1+\frac{S_2}{1-S_1},\qquad S_1=\sum_i\frac1{\delta_i},\quad S_2=\sum_i\frac1{\delta_i^2}.
\end{equation}
Here $M(R)$ is positive definite for $\kappa>0$, so $S_1<1$ and $H$ is strictly decreasing in each $w_i$.
Inverting \eqref{eq:HKN} yields both the homogeneous minimal count
$k^\star(\varepsilon)=\min\{k:H(k)\le\varepsilon\}$, defined for any $\varepsilon$ above the floor $H(N)$, and
the frontier $B^\star$ over the cost-coupled weights. The formula is exact, matching the direct inverse to
machine precision (proof in~\ref{app:frontier}).
\end{proposition}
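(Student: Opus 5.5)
On $K_N$ we have $L=NI-\mathbf 1\mathbf 1^\top$, so $M(R)=L+\kappa I+W_R=\Lambda-\mathbf 1\mathbf 1^\top$ with $\Lambda=\diag(\delta_i)$ and $\delta_i=N+\kappa+w_i\mathds 1[i\in R]$. The plan is to invert this diagonal-minus-rank-one matrix with Sherman--Morrison and then take the trace. We first need $1-\mathbf 1^\top\Lambda^{-1}\mathbf 1=1-S_1\neq0$. We get it from positive definiteness: $M(R)=M_0+W_R\succeq L+\kappa I\succ0$ since $\kappa>0$. Also, $\Lambda\succ0$ and $\det M(R)=\det\Lambda\,(1-S_1)$ by the matrix determinant lemma. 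Hence $1-S_1>0$, i.e.\ $S_1<1$.

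Sherman--Morrison then gives
\[
M(R)^{-1}=\Lambda^{-1}+\frac{\Lambda^{-1}\mathbf 1\mathbf 1^\top\Lambda^{-1}}{1-\mathbf 1^\top\Lambda^{-1}\mathbf 1}.
\]
Taking traces, $\tr\Lambda^{-1}=S_1$. For the rank-one term, $\tr(\Lambda^{-1}\mathbf 1\mathbf 1^\top\Lambda^{-1})=\|\Lambda^{-1}\mathbf 1\|^2=S_2$. Together these give \eqref{eq:HKN}.

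For strict monotonicity, note that $w_i$ enters only through $\delta_i$, and only when $i\in R$. So it suffices to show that $H$ is strictly decreasing in each $\delta_i$ on the region $\{S_1<1\}$. Set $u_i=1/\delta_i$, so $\partial u_i/\partial\delta_i<0$. Then
\[
\frac{\partial H}{\partial u_i}=1+\frac{2u_i}{1-S_1}+\frac{S_2}{(1-S_1)^2}>0,
\]
and the chain rule gives $\partial H/\partial\delta_i<0$, hence $\partial H/\partial w_i<0$. As a cross-check, $\partial H/\partial w_i=-\|M(R)^{-1}e_i\|^2<0$ follows directly from differentiating the inverse; this matches the marginal gain formula \eqref{eq:marg} in the limit.

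For the inversion claims, first take homogeneous pins $w_i\equiv w$ with $|R|=k$. Then $S_1$ and $S_2$ depend only on $k$ through two values of $\delta$, so $H(k)$ is an explicit scalar function. By Theorem~\ref{thm:submod} (or the monotonicity just shown) it is strictly decreasing in $k$, with minimum value $H(N)$. Therefore $k^\star(\varepsilon)=\min\{k:H(k)\le\varepsilon\}$ is well defined for every $\varepsilon\ge H(N)$, and it is found by scanning $k$, or by solving the scalar inequality.

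For the cost-coupled frontier, $H$ depends on $R$ only through the multiset $\{w(c_i)\}_{i\in R}$, because $K_N$ is fully symmetric. Thus $B^\star(\varepsilon)$ reduces to a one-dimensional search over which costs to buy, evaluated with the closed form.

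The main point requiring care is the sign $1-S_1>0$. It is not obvious from the entries alone, and it is what makes both the formula valid and the monotonicity derivative positive; the determinant-lemma argument above handles it. The claimed machine-precision agreement with the direct inverse is a numerical check and needs no proof beyond the exact identity.
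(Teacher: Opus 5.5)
Your proof is correct and follows the paper's own proof: write $M(R)=\Lambda-\mathbf 1\mathbf 1^\top$, invert it by Sherman--Morrison, and take the trace to get $S_1+S_2/(1-S_1)$; your monotonicity derivative is the paper's observation that $H$ is increasing in both $S_1$ and $S_2$. Two small corrections: $S_1<1$ is in fact immediate from the entries, since $\delta_i\ge N+\kappa$ gives $S_1\le N/(N+\kappa)<1$ (this is how the paper gets it), so the determinant-lemma detour is unnecessary; and with heterogeneous costs, inverting for $B^\star$ is a knapsack-type search over subsets evaluated by the closed form, not a one-dimensional search.
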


Should a fixed budget buy a few strong oracles or many medium ones? On the complete graph the whole answer
is a sharp dichotomy, decided by one scalar of the cost--quality law $w$: its curvature.

\begin{theorem}[curvature dichotomy: spread versus concentrate]\label{thm:curv}
Let a budget $B>0$ be allocated on $K_N$ over oracles of costs $c_i\ge0$ with $\sum_i c_i=B$ and strengths
$w_i=w(c_i)$, and let the placement minimize the coherence \eqref{eq:HKN}. Then the optimum is decided by the
sign of $(N+\kappa+w)\,w''-3(w')^2$ on $[0,B]$.
\begin{itemize}\itemsep2pt
\item[\textup{(i)}] If $w$ is concave, this quantity is negative, and the minimizer is unique and uniform,
$c_i^\star=B/N$. In this case the equal-split coherence $H(m)$ over $m$ oracles of cost $B/m$ is strictly
decreasing in $m$, so the budget optimally buys $N$ medium correctors.
\item[\textup{(ii)}] If instead
\begin{equation}\label{eq:concentrate}
  (N+\kappa+w(c))\,w''(c)\ \ge\ 3\,(w'(c))^2\qquad\text{for all }c\in[0,B],
\end{equation}
then the minimizer places the entire budget on a single oracle.
\end{itemize}
In both cases the $(1-1/e)$ guarantee of Theorem~\ref{thm:knapsack} and the frontier $B^\star(\varepsilon)$ are
unchanged; only the location of the optimum moves.
\end{theorem}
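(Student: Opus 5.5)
The plan is to reduce the allocation problem on $K_N$ to a separable one using the closed form \eqref{eq:HKN} of Proposition~\ref{prop:KN}, and then to settle it by Jensen's inequality on the two per-node sums. I treat an allocation $c=(c_1,\dots,c_N)$ with $c_i\ge0$ and $\sum_i c_i=B$, where an unpinned node has $c_i=0$. This requires extending the law by $w(0)=0$ so that $\delta_i=N+\kappa+w(c_i)$ covers both pinned and unpinned nodes; I will check at the end that this extension is compatible with the concavity or convexity hypotheses. Write $\delta(c)=N+\kappa+w(c)$, $f(c)=1/\delta(c)$ and $g(c)=1/\delta(c)^2$. Then $S_1=\sum_i f(c_i)$, $S_2=\sum_i g(c_i)$, and $H=\Phi(S_1,S_2)$ with $\Phi(s_1,s_2)=s_1+s_2/(1-s_1)$.

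\textbf{Step 1 (monotonicity of $\Phi$).} On the feasible region $S_1<1$ (Proposition~\ref{prop:KN}) we have
$$\partial_{s_1}\Phi=1+\frac{s_2}{(1-s_1)^2}>0,\qquad \partial_{s_2}\Phi=\frac{1}{1-s_1}>0.$$
So $\Phi$ is strictly increasing in each argument. Any allocation that simultaneously minimizes $S_1$ and $S_2$ therefore minimizes $H$, and no other allocation can do better. I deliberately avoid joint convexity of $\Phi$, which fails: the Hessian of $s_2/(1-s_1)$ is indefinite.

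\textbf{Step 2 (curvature of the per-node terms).} Differentiating gives
$$f''=\frac{2(w')^2-\delta w''}{\delta^3},\qquad g''=\frac{2\big(3(w')^2-\delta w''\big)}{\delta^4}.$$
This is where the scalar $(N+\kappa+w)w''-3(w')^2$ enters: it is exactly the sign of $-g''$.

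In case (i), $w''\le0$ and $w'>0$ give $f''>0$ and $g''>0$, so $f$ and $g$ are strictly convex. Jensen over the simplex $\{\sum_i c_i=B\}$ then shows that $S_1$ and $S_2$ are each uniquely minimized at $c_i=B/N$. By Step 1 this is the unique minimizer of $H$.

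For the claim about $H(m)$, compare the equal split on $m+1$ nodes with the allocation $(B/m,\dots,B/m,0)$ on those same $m+1$ nodes. Strict Jensen gives
$$(m+1)\varphi\big(B/(m+1)\big)<m\,\varphi(B/m)+\varphi(0)\qquad\text{for }\varphi\in\{f,g\}.$$
The remaining $N-m-1$ nodes are identical in both allocations, so $H(m+1)<H(m)$.

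In case (ii), condition \eqref{eq:concentrate} gives $\delta w''\ge3(w')^2\ge2(w')^2$, so both $f$ and $g$ are concave on $[0,B]$. A concave separable sum over the simplex attains its minimum at a vertex. By symmetry of $K_N$ every vertex gives the same value, so both $S_1$ and $S_2$ are minimized by putting all of $B$ on one oracle. Step 1 again transfers this to $H$.

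\textbf{Step 3 (closing remark).} Theorem~\ref{thm:submod} only needs $w_i>0$, so Theorem~\ref{thm:knapsack} and the greedy computation of $B^\star(\varepsilon)$ are untouched; only the argmin moves.

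\textbf{Main obstacle.} I expect the delicate point to be the boundary $c=0$ and the domain of $w$. Assumption~\ref{ass:wc} defines $w$ on $\R_{>0}$ with positive values, while the allocation puts zeros on unpinned nodes.
\begin{itemize}
\item I will extend by $w(0)=0$, using the limit $w(0^+)$ if necessary.
\item In case (i), a concave $w$ with $w(0^+)\ge0$ stays concave after setting $w(0)=0$, so Jensen holds on $[0,B]$. The case needs a positive jump at $0$ to preserve concavity on $[0,B]$, which a $w(0^+)=0$ law such as the example $\bar w(1-e^{-c/c_0})$ satisfies.
\item In case (ii), the hypothesis is stated on $[0,B]$, which implicitly includes $0$, so the extension must be continuous there.
\end{itemize}
I will also confirm that in case (i) the strict inequality for $g''$ needs only $w'>0$, which is given since $w$ is strictly increasing.
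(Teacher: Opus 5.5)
Your proof is correct, and it takes a different route from the paper. For the global claim in (i), the paper uses a pairwise-balancing exchange (Lemma~\ref{lem:balance}): it freezes all but two coordinates and shows $F(t)=\Phi(u(t),v(t))$ is even and convex via the second-order chain rule. Compactness then rules out a minimizer with two unequal coordinates (Theorem~\ref{thm:global}), and the $m$-monotonicity is a separate continuous-$m$ derivative computation (Theorem~\ref{thm:mmono}).

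You instead fold the unpinned nodes into the simplex as $c_i=0$ and run a single strict Jensen over all $N$ per-node terms. This needs only two facts: $\Phi$ is coordinatewise increasing, and $S_1,S_2$ are minimized at the same point. It is the paper's fixed-$m$ Jensen argument of \ref{app:frontier} extended to the whole simplex. The extension is legitimate because you check that setting $w(0)=0$ keeps $f$ and $g$ convex at the endpoint. Your discrete comparison of the equal split on $m+1$ nodes with $(B/m,\dots,B/m,0)$ replaces the derivative formula. It gives strictness from $w'>0$ alone, whereas Theorem~\ref{thm:mmono} claims strictness only for strictly concave $w$, even though Theorem~\ref{thm:curv}(i) asserts it for concave $w$.

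The largest gain is in (ii). The paper supports it only through the sign of $\mathrm dH/\mathrm dm$ along equal splits (Remark~\ref{rem:sharp}) plus numerics, which does not exclude unequal allocations. Your argument that $S_1$ and $S_2$ are sums of concave functions, hence minimized at a vertex of the simplex, proves global optimality of the single-oracle allocation. In exchange, the paper's route yields the explicit decomposition \eqref{eq:dHdm}, which shows where the $2(w')^2$ and $3(w')^2$ thresholds enter and underpins the sharpness discussion.

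Two small touch-ups. First, in your boundary discussion, case (i) requires a nonnegative jump $w(0^+)-w(0)\ge0$, not a positive one, and this holds automatically. Second, if you want (ii) to assert that the minimizer is unique up to relabeling, add that $w'>0$ gives $(N+\kappa+w)w''\ge3(w')^2>2(w')^2$. Then $f''<0$, so $S_1$ is strictly concave and is minimized only at vertices.
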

\begin{proof}
Write $g(c)=1/(N+\kappa+w(c))$, so \eqref{eq:HKN} reads $H=S_1+S_2/(1-S_1)$ with $S_1=\sum_i g(c_i)$ and
$S_2=\sum_i g(c_i)^2$. Case~(i) is Theorem~\ref{thm:global}. Under Assumption~\ref{ass:wc} both $g$ and $g^2$
are convex, so a pairwise-balancing exchange strictly lowers $H$ off the equal point and the unique minimizer
is constant; strict monotonicity in $m$ is Theorem~\ref{thm:mmono}. Case~(ii) is Remark~\ref{rem:sharp}, where
the factor~$3$ is shown to be sharp: it is the curvature at which $g^2$ turns concave. Plain convexity of $w$
is not enough, because the leverage term $(w')^2$ keeps the spread optimal below that threshold. The full
argument is in~\ref{app:spread}.
\end{proof}

\begin{corollary}[minimal oracle count]\label{cor:kstar}
Under homogeneous costs $c_i\equiv1$ and strengths $w_i\equiv w$, Problem~\ref{prob:dual} is a pure
cardinality problem and $B^\star(\varepsilon)=k^\star(\varepsilon)$. On the complete graph symmetry makes
greedy exact, so $k^\star(\varepsilon)$ is read off the strictly decreasing $H(k)$ of
Proposition~\ref{prop:KN} directly. This recovers the homogeneous corrector placement of \cite{ItkinDelay}
as the unit-cost limit.
\end{corollary}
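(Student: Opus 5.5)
The corollary has three claims, and I would establish them in order: first the reduction to cardinality, then exactness of greedy on $K_N$ by symmetry, then reading $k^\star$ off the monotone closed form of Proposition~\ref{prop:KN}.

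\emph{Step 1: reduction to cardinality.} Set $c_i\equiv1$ and $w_i\equiv w$. Every placement then has cost $\sum_{i\in R}c_i=|R|$. Problem~\ref{prob:dual} therefore becomes $\min\{|R|:H(R)\le\varepsilon\}$, and $B^\star(\varepsilon)$ is an integer: the smallest size of a placement certifying $\varepsilon$-correctness.

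\emph{Step 2: symmetry on $K_N$.} Take $\Oc=V$. In the notation of Proposition~\ref{prop:KN}, each diagonal entry is $\delta_i=N+\kappa+w\,\mathds 1[i\in R]$. If $|R|=k$, exactly $k$ entries equal $N+\kappa+w$ and $N-k$ entries equal $N+\kappa$, so
\[
S_1=\frac{k}{N+\kappa+w}+\frac{N-k}{N+\kappa},\qquad S_2=\frac{k}{(N+\kappa+w)^2}+\frac{N-k}{(N+\kappa)^2},
\]
and by \eqref{eq:HKN} $H(R)=H(k)$ depends on $R$ only through $k$. An independent check: every vertex permutation is an automorphism of $K_N$ and conjugates $M(R)$ to $M(\pi R)$, which leaves the trace unchanged.

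\emph{Step 3: $H(k)$ is strictly decreasing.} Going from $k$ to $k+1$ raises one $\delta_i$ from $N+\kappa$ to $N+\kappa+w$. Proposition~\ref{prop:KN} states that $H$ is strictly decreasing in each $w_i$, using $S_1<1$ from positive definiteness. Concretely, $S_1$ and $S_2$ both strictly decrease, and $S_2/(1-S_1)$ decreases because its numerator falls while its positive denominator grows. Hence $H(0)>H(1)>\dots>H(N)$.

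\emph{Step 4: conclusion.} Any set of size $k$ is optimal among sets of size $k$, so greedy is exact. Every candidate ties in \eqref{eq:marg} by symmetry, and each greedy prefix of size $k$ attains $H(k)$. Combining this with monotonicity, $B^\star(\varepsilon)=\min\{k:H(k)\le\varepsilon\}=k^\star(\varepsilon)$. This is finite exactly when $\varepsilon\ge H(N)$, which is the floor named in Proposition~\ref{prop:KN}.

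\emph{Recovery of \cite{ItkinDelay}.} Specializing \eqref{eq:Mop} to $w_i\equiv w$ gives identical leaders. The knapsack constraint then collapses to the cardinality constraint $|R|\le B$, which is the homogeneous placement studied there. This is a matter of definitions.

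\emph{Main obstacle.} The only delicate point is the strict-decrease claim. Strict monotonicity, and not just weak monotonicity, is what makes $k^\star$ well defined as a threshold. The argument relies on the sign of $1-S_1$, so I would first verify $S_1<1$ directly. The route is the identity $\det(\Lambda-\mathbf 1\mathbf 1^\top)=\det\Lambda\,(1-S_1)>0$ (matrix determinant lemma), together with $\det\Lambda>0$.
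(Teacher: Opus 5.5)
Your proposal is correct and follows essentially the paper's own route: unit costs turn Problem~\ref{prob:dual} into a cardinality problem, and symmetry of $K_N$ makes $H$ in \eqref{eq:HKN} depend on $R$ only through $k=|R|$. The strict decrease of $H(k)$ from Proposition~\ref{prop:KN} then gives $k^\star(\varepsilon)=\min\{k:H(k)\le\varepsilon\}$ directly; the only differences are cosmetic. The paper gets $S_1<1$ from $\delta_i\ge N+\kappa$, i.e.\ $S_1\le N/(N+\kappa)<1$, which is quicker than your determinant-lemma argument, and weak monotonicity of $H(k)$ would already make $k^\star$ a threshold, so strictness is not the crux you describe.
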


\begin{remark}[value guarantee versus budget guarantee]\label{rem:bicriteria}
The $(1-1/e)$ bound of Theorem~\ref{thm:knapsack} is a value guarantee at a fixed budget, not a
budget guarantee. Inverting it gives a value-bicriteria statement: at count $k^\star$ the greedy set
reaches $(1-1/e)$ of the required reduction, while hitting the full target $\varepsilon$ carries the
logarithmic submodular-cover factor.
\end{remark}

\begin{remark}[the deployment reading]
Theorem~\ref{thm:curv} answers the motivating question with one measurable scalar: whether to deploy a few
strong correctors or many medium ones is decided by the curvature of real-model quality-per-dollar. Where
that curvature is concave (the diminishing-returns regime our single-family measurement indicates, with
strength saturating by ${\sim}2$B on the task we measure; Section~\ref{sec:exp}), spreading is optimal under
the model and the ``escalate to one strong model'' heuristic is suboptimal for average correctness;
concentration is justified only where quality is sharply convex in cost (Eq.~\eqref{eq:concentrate}),
which mere convexity does not guarantee.
\end{remark}

\begin{remark}[invariance to the cost axis]\label{rmk:axis}
The verdict is a statement about curvature, so it depends on which resource the budget counts. It is
invariant under any affine rescaling of that axis, and more generally the verdict computed in a
resource $c$ transfers to a resource $s$ with $c=\phi(s)$ exactly when the elasticity product
$\varepsilon_v\,\varepsilon_\phi<1$, where $v(c)$ is the placement value delivered at cost $c$,
$\varepsilon_v=c\,v'(c)/v(c)$ is its elasticity, and $\varepsilon_\phi=s\,\phi'(s)/\phi(s)$. Both inference dollar-cost and energy per token are affine in model size (cost and joules
per token $\propto$ floating-point operations $\propto$ parameters), so $\varepsilon_\phi=1$ and the
parameter-count, dollar, and energy axes give the same verdict. A superlinear dollar cost (large models
disproportionately expensive, $\varepsilon_\phi<1$) only strengthens the case for spreading. The verdict
can invert to concentration only under a strongly concave cost, a per-size volume discount on
large models that standard inference pricing does not exhibit.
\end{remark}

\begin{remark}[biased correctors give an interior optimum]\label{rmk:bias}
The headline ``buy many cheap correctors'' does not degenerate into ``buy infinitely many infinitesimal
ones'': a real corrector pins toward its own verified belief, which carries a bias. Let oracle $i$
pull node $i$ toward $b^\star+\beta_i$ with $\beta_i$ zero-mean, variance $\sigma_\beta^2$, independent of
the fault $g$ (covariance $\sigma^2 I$). The equilibrium error becomes $e_\infty=M(R)^{-1}(g+W_R\beta)$, so
\[
  \E\|e_\infty\|^2=\underbrace{\sigma^2\,\tr M(R)^{-2}}_{\text{tracking (falls with pinning)}}
  \;+\;\underbrace{\sigma_\beta^2\,\tr\!\big(W_R M(R)^{-2}W_R\big)}_{\text{injected bias (rises with pinning)}} .
\]
Per symmetric mode the two terms trade off at a finite optimal strength $w^\star=\sigma^2/(a\,\sigma_\beta^2)$
($a$ the mode's decay rate): perfect correctors ($\sigma_\beta\to0$) recover the pin-as-hard-as-possible
regime of the coherence objective, while any corrector bias caps the useful pinning and makes the optimal
total budget interior. Theorem~\ref{thm:curv} is unaffected: it decides how to allocate
whatever budget is optimal; corrector bias only decides how much to spend.
\end{remark}

Figure~\ref{fig:stageA} shows both halves on a random graph: at a fixed budget the concave (real-LLM)
law is minimized by spreading over many medium correctors while a sharply convex law
(Eq.~\eqref{eq:concentrate}) concentrates on a few
strong ones, and the cost-benefit greedy reaches a lower error per unit budget than degree-based or
random placement.

\begin{figure}[t]
  \centering
  \includegraphics[width=0.92\linewidth]{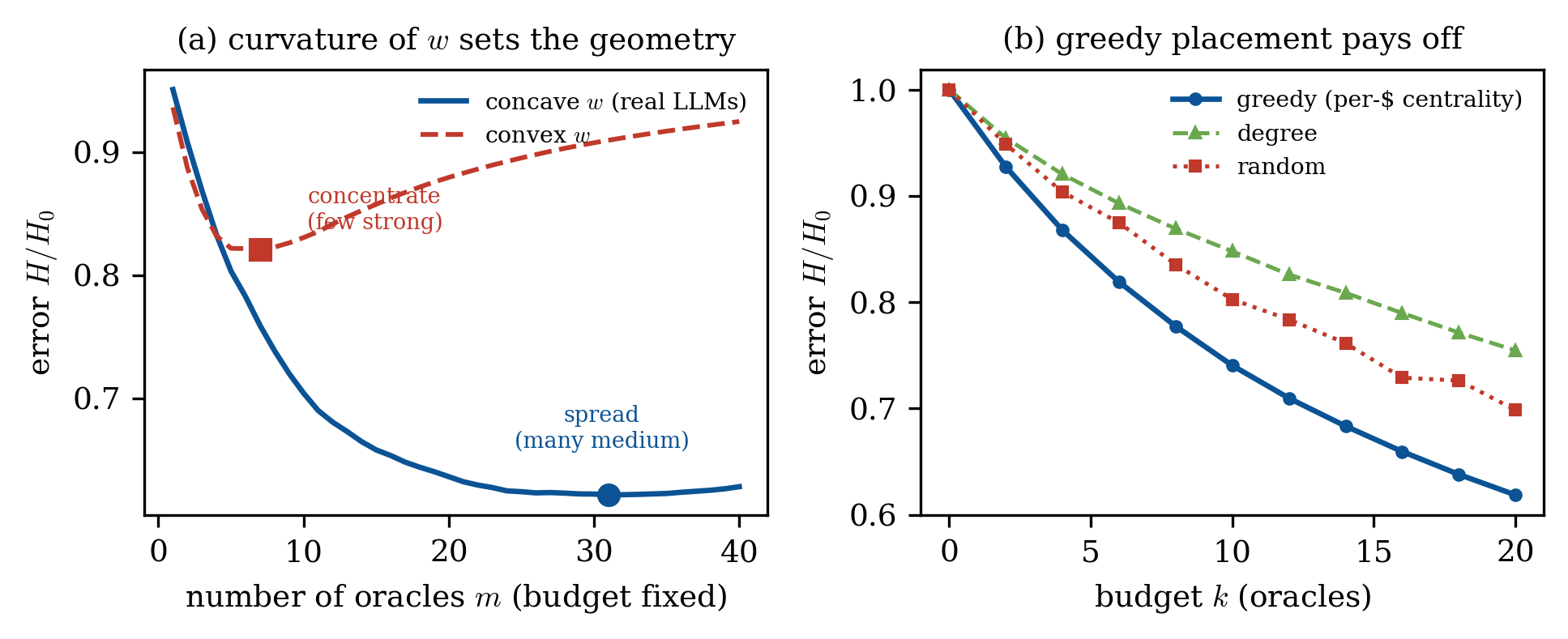}
  \caption{\textbf{Stage~A on a random graph.} (a)~at a fixed budget the truth-tracking error $H/H_0$ is
  minimized by spreading over many medium correctors when $w$ is concave (the regime we measure for Qwen3 in
  Section~\ref{sec:exp}) and by concentrating on a few strong ones when $w$ is sharply convex
  (Eq.~\eqref{eq:concentrate}; Theorem~\ref{thm:curv}). (b)~cost-benefit greedy placement reaches a lower error per unit budget than
  degree-based or random placement (Theorem~\ref{thm:knapsack}).}
  \label{fig:stageA}
\end{figure}

The frontier itself is shown in Fig.~\ref{fig:frontier}: greedy reaches a target error at a smaller
budget than random ($B^\star_{\mathrm{greedy}}<B^\star_{\mathrm{rand}}$), and for the coherence
criterion the minimal budget is a constant fraction of $N$, above the worst-case $\sim\!1/d$.

\begin{figure}[t]
  \centering
  \includegraphics[width=0.92\linewidth]{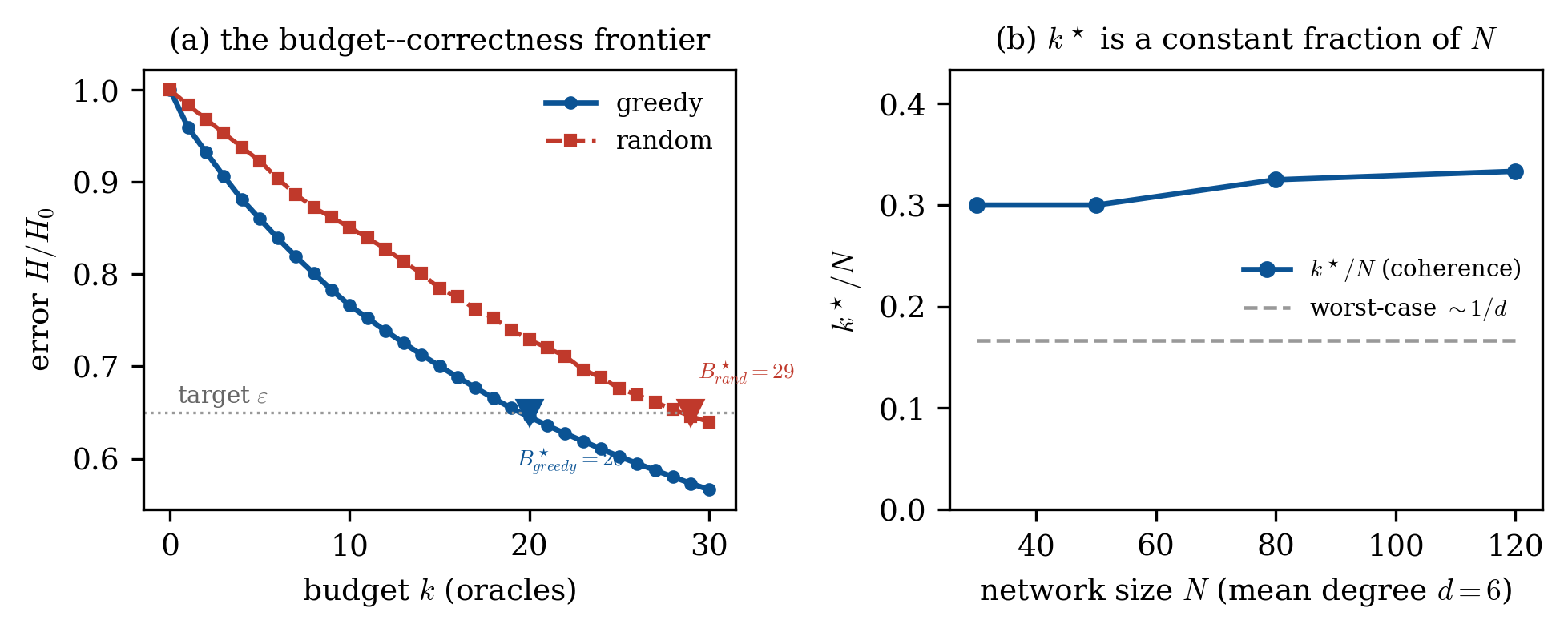}
  \caption{\textbf{The budget--correctness frontier.} (a)~greedy placement reaches a target error
  $\varepsilon$ at a smaller budget than random, so $B^\star_{\mathrm{greedy}}<B^\star_{\mathrm{rand}}$.
  (b)~for the coherence objective the minimal budget $k^\star$ is a constant fraction of $N$ (here
  ${\approx}0.3$ at mean degree $d{=}6$), well above the worst-case $\lambda_{\min}$ scaling $\sim\!1/d$.}
  \label{fig:frontier}
\end{figure}

\section{Measuring the cost--quality law}\label{sec:exp}
Theorem~\ref{thm:curv} makes the deployment answer hinge on one empirical property: the curvature of the
real cost--quality law $w(c)$. Measuring it cleanly means controlling three things that defeat a naive
attempt. We sweep a single model family (five Qwen3 models \cite{Qwen3} from $0.6$ to $14$B, plus a $32$B top-end check) so size is not confounded
with family. We calibrate task difficulty: each verifier is shown the evidence and two candidate
answers, the gold one and a plausible same-topic hallucination drawn from PsiloQA \cite{Psilo}, and must
choose which the evidence supports (a two-alternative forced choice, 2-AFC; $n{=}200$, both answer orders to
cancel position bias); an easier
variant with blatantly contradictory distractors saturated the larger models near $0.97$ and revealed nothing. And
we score by the model's log-probability of the gold choice rather than by parsing its free text,
which otherwise penalizes models that do not follow the answer format. The resulting strength $w$ (the
position-averaged probability of the gold choice) is a proxy for the abstract pin strength $w_i$, a
link we do not otherwise validate.

The measured law is concave. Strength rises steeply from $0.86$ at $0.6$B to $0.94$ at $1.7$B, then
saturates near $0.95$; a $32$B top-end point lands on the same ${\sim}0.96$ plateau, and the $8$--$32$B change-of-slope confidence interval (CI) straddles zero, so the law does not re-accelerate at the top (Fig.~\ref{fig:wc}). A bootstrap test on the change of slope confirms
the diminishing returns: the $0.6$--$4$B change-of-slope confidence interval lies entirely below zero, while
the later segments are flat at the plateau. This concavity is not an artifact of the bounded probability
scale: it survives mapping the strength through a logit transform into an unbounded range, where the
change-of-slope confidence interval stays entirely negative. On this task corrector strength therefore
saturates by about $2$B; past that point extra parameters buy almost no extra correction. This is exactly the concave regime of
Theorem~\ref{thm:curv}, and it makes the deployment reading concrete: a fixed budget buys far more average
correctness as many small-but-adequate correctors than as a few large ones. The verdict is robust to
measurement noise: resampling the per-item scores, refitting the law, and recomputing the budget-optimal
placement on $K_N$, the optimum spreads in $100\%$ of $4000$ bootstrap resamples. The one scope
caveat is the plateau location, not the sign: it is task-dependent, which we now measure directly
rather than leave to conjecture.

To test the location claim instead of asserting it, we grade PsiloQA difficulty by the natural-language-inference (NLI) contradiction
between the gold answer and its hallucinated twin and rerun the $0.6$--$14$B ladder on three disjoint bands:
blatant distractors (easy, contradiction ${\ge}0.5$), subtle ones (hard, $[0.10,0.45)$), and the subtlest
(very hard, $[0.02,0.10)$, sharing no items with the hard band). The plateau forms later as the distractors
get subtler. Between $1.7$ and $4$B the easy and hard curves are flat (paired-bootstrap slope CIs
$[-0.002,+0.008]$ and $[-0.004,+0.011]$), whereas the very-hard curve is still rising there (paired CI
$[+0.005,+0.021]$, excluding zero; Fig.~\ref{fig:diffsweep}). Fitting $w(c)=\bar w(1-e^{-c/c_0})$ to each
band gives a characteristic saturation scale $c_0$ that grows monotonically with difficulty,
$0.20\to0.25\to0.31$B. The curvature stays concave on all three bands. We use a paired bootstrap because the
two model sizes score the same items (per-item correlation ${\approx}0.73$ on this band); under the more
conservative independent bootstrap the very-hard slope is only marginal (CI $[-0.002,+0.028]$), so we read
the difficulty-dependence of the plateau location as suggestive, not decisive. The concave
sign is what the deployment verdict needs, and that is robust across all three bands.

\begin{figure}[t]
  \centering
  \includegraphics[width=0.62\linewidth]{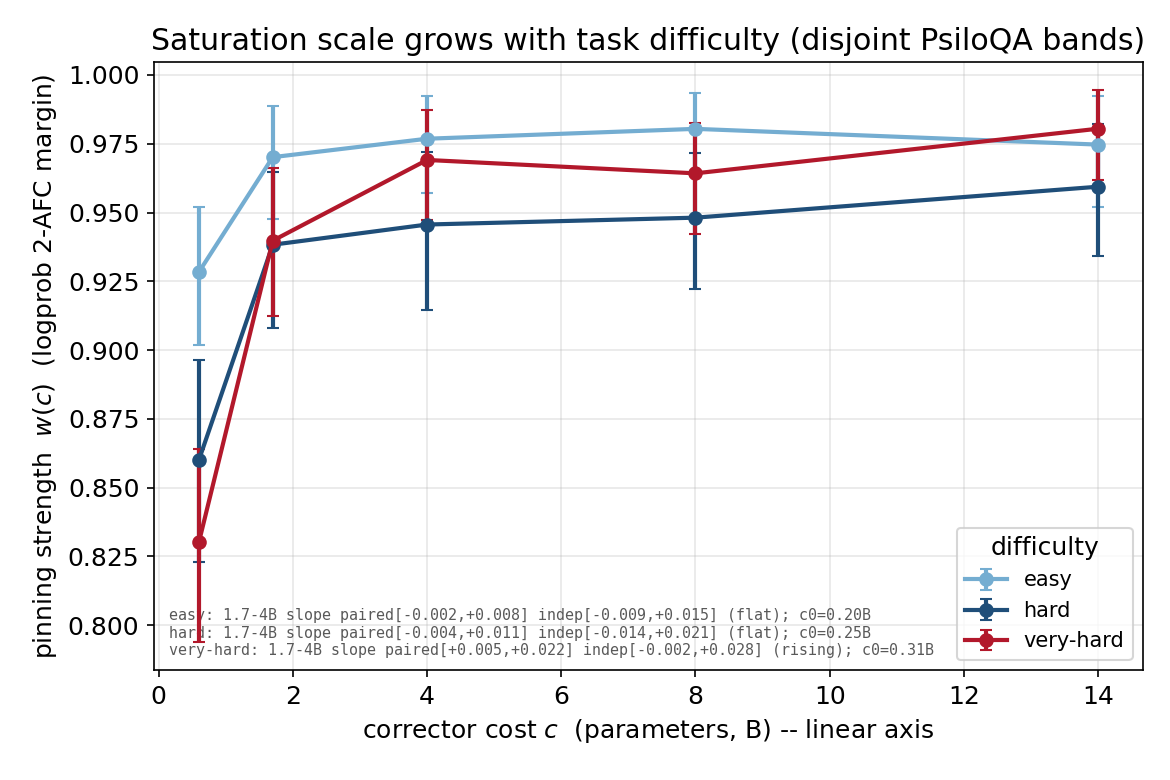}
  \caption{\textbf{The saturation scale grows as the task gets harder.} The same Qwen3 ladder
  ($0.6$--$14$B, logprob 2-AFC, $n{=}200$) on three disjoint PsiloQA difficulty bands, graded by the
  gold--hallucination NLI contradiction. Easy and hard distractors are flat between $1.7$ and $4$B; the
  subtlest band is still rising there (paired-bootstrap slope CI excludes zero; marginal under an independent
  bootstrap). The fitted scale $c_0$ in $w=\bar w(1-e^{-c/c_0})$ grows $0.20\to0.25\to0.31$B. Error bars are
  $95\%$ bootstrap CIs. The curvature is concave in every band, so spreading stays optimal; only where
  the plateau sets in shifts.}
  \label{fig:diffsweep}
\end{figure}

\begin{figure}[t]
  \centering
  \includegraphics[width=0.62\linewidth]{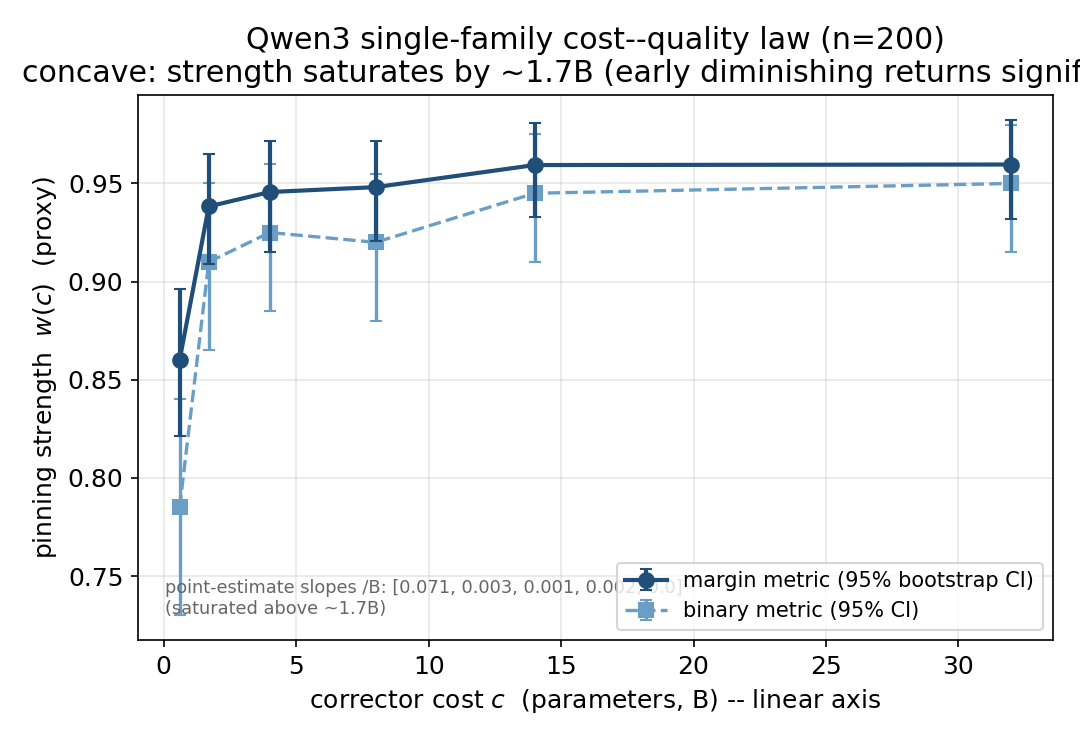}
  \caption{\textbf{The measured cost--quality law is concave (Qwen3, $0.6$--$32$B, $n{=}200$).} Each model
  chooses, from the evidence, between the gold answer and a plausible hallucination (PsiloQA); $w$ is the
  model's probability of the gold choice (scored from token log-probabilities), position-averaged over both answer orders (solid: continuous, $95\%$
  bootstrap CI; dashed: robust pick in both orders). Strength rises steeply to ${\sim}1.7$B then saturates;
  the $0.6$--$4$B change-of-slope CI is below zero (significant diminishing returns) and the later segments
  are flat. The single family removes the model-family confound and the linear axis does not flatter the
  curvature. By Theorem~\ref{thm:curv} this concavity makes spreading the budget optimal.}
  \label{fig:wc}
\end{figure}

\emph{A second family.} The concave saturation is not a Qwen idiosyncrasy. Rerunning the identical
$2$-AFC protocol on the same PsiloQA hard band with the architecturally distinct Gemma-4 ladder
\cite{Gemma4} (E2B/E4B/$12$B, plus a $31$B top-end check) reproduces it: strength plateaus near ${\sim}0.95$,
flat from $2$ to $31$B (Fig.~\ref{fig:crossfamily}). If anything the newer family saturates earlier, already
at the ceiling by $2$B, so extra parameters buy even less correction than on Qwen3 and the deployment
verdict (spread over many medium correctors) holds a fortiori. This meets the single-family caveat with a
second vendor and architecture.

\emph{A different task.} Family and difficulty are not the only axes; the task type moves the plateau
too. On a math-reasoning $2$-AFC built from GSM8K (the worked solution as evidence, a plausible
arithmetic-slip distractor) the Qwen3 ladder is again concave, with the $1.7$--$4$--$8$B change-of-slope CI
entirely negative ($[-0.038,-0.012]$). It saturates later and higher than factual recall: the
significant gains run through $4$B and the plateau sits near ${\sim}0.98$ rather than ${\sim}0.95$. The
concave sign, and with it the spreading verdict, survives the task change; only where the
plateau sets in shifts, consistent with the difficulty sweep. A code-tracing $2$-AFC (predict a program's
printed output), a task needing a capability that emerges only with scale, realizes the
other, convex branch: on demanding programs the whole ladder stays at chance ($0.49$ to $0.57$ over
$0.6$--$32$B, no usable verifier), while on tractable programs the law is sigmoidal: flat at chance
through $1.7$B, then a convex emergent rise ($0.49\to0.56\to0.60\to0.72$ from $1.7$ to $14$B, robust
both-order accuracy climbing $0.04\to0.55$) before saturating at ${\sim}0.72$. So the concave (spread) and
convex (concentrate) regimes of Theorem~\ref{thm:curv} are both empirically instantiated, and the curvature,
and with it the deployment verdict, is genuinely task-dependent: factual and math verification saturate
concavely (spread), while emergent code tracing rises convexly (concentrate up to the knee).

\begin{figure}[t]
  \centering
  \includegraphics[width=0.62\linewidth]{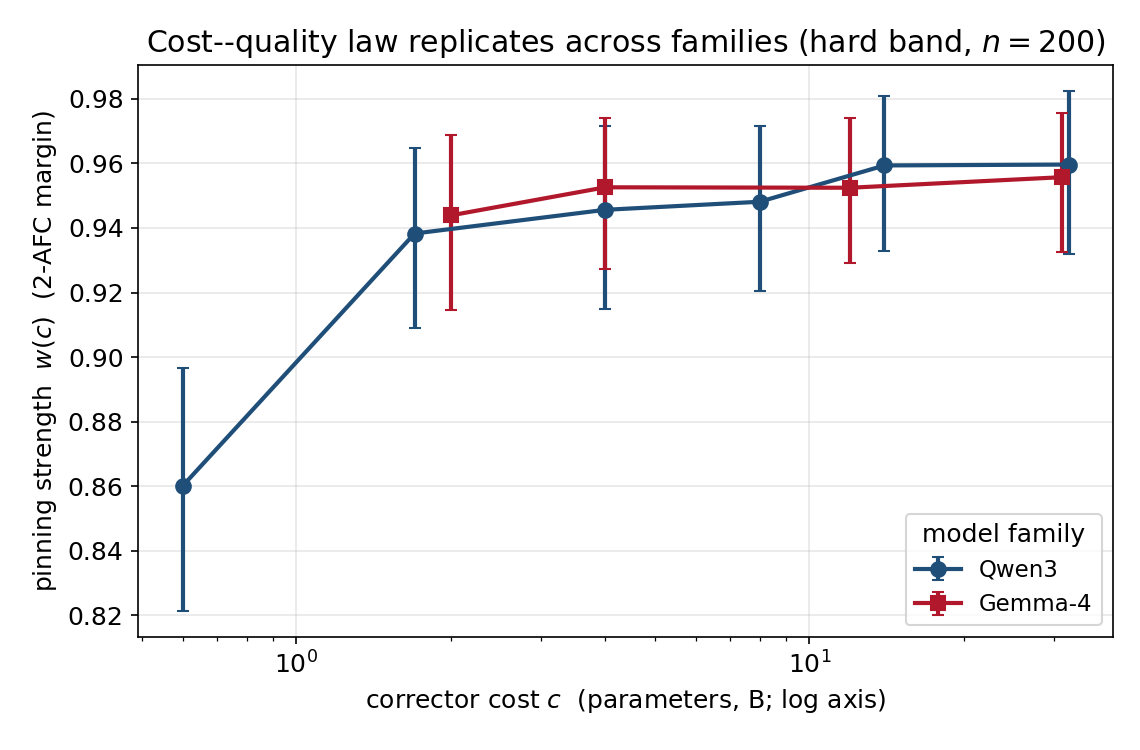}
  \caption{\textbf{The concave law replicates across families.} The Qwen3 ($0.6$--$32$B) and Gemma-4
  (E2B/E4B/$12$/$31$B) ladders on the same PsiloQA hard band, same $2$-AFC logprob protocol ($n{=}200$). Both
  rise then plateau near ${\sim}0.95$; the newer Gemma-4 saturates earlier. Error bars are $95\%$ bootstrap
  CIs; $x$-axis is nominal size (log).}
  \label{fig:crossfamily}
\end{figure}

The full budget--correctness sweep (greedy versus random placement against $B^\star(\varepsilon)$, and
the Stage-B phase transition of Section~\ref{sec:disc}) calibrates only $\kappa$ and the measured
$w(c)$; the theory remains the primary deliverable.

\section{Stage B: the high-probability frontier}\label{sec:disc}
Stage~A studied a linear consensus and its average error. Stage~B asks the same budgeting
question (how many oracles, and where) of a nonlinear majority dynamics, and asks it in a
stronger form: not whether the average error is small, but whether truth wins with high probability.
The two stages model one deployment problem from two angles. They are in fact one dynamical system: Stage~A
is the linearization of Stage~B about the truth-consensus, so the same grounded operator $M(R)$ governs both
(Remark~\ref{rmk:unif}); far from consensus, though, the nonlinear threshold studied here is a genuinely
sharper, non-spectral object.

The model is a majority cascade. Each agent $j$ holds a $\pm1$ belief $x_j$ and, at each step, copies the
majority vote of its neighbours, correctly with reliability $p_{\mathrm r}\in(0,1)$ and at random otherwise. A
set $R$ of $k$ oracles is pinned to $+1$ (truth), a set $F$ of $f$ seeds is pinned to $-1$ (a planted
falsehood), and the remaining agents are free; write $\rho_R=k/N$ and $\rho_F=f/N$ for the oracle and seed
fractions. We study two initial conditions for the free agents: a
\emph{balanced start}, where each free agent is $\pm1$ independently with equal probability (initial
magnetization $O(1/\sqrt N)$, so no starting majority), and an \emph{established-falsehood start}, where every
free agent begins at $-1$. Throughout, ``with high probability'' means with probability $1-o(1)$ as
$N\to\infty$. The question is whether truth wins the final consensus. Simulation shows a sharp \emph{phase transition} in the oracle count
(Fig.~\ref{fig:phase}): the probability that truth wins jumps from near $0$ to near $1$ at a critical count.
That critical fraction $k^\star/N$ barely depends on $N$, and the jump sharpens as $N$ grows, a
genuine thermodynamic threshold rather than a smooth crossover. Concentrating the oracles on high-leverage nodes
roughly halves the number needed.

On the complete graph the threshold is the count balance $k>f$, sharp in the fraction (a margin $k-f\gg
\sqrt N$; Proposition~\ref{prop:stageB}). On a general graph it becomes a degree-weighted \emph{influence
balance}, $D_R>D_F$, where $D_R=\sum_{i\in R}d_i$ and $D_F=\sum_{j\in F}d_j$ are the total degrees of the
oracle and seed sets and $d_i$ is the degree of node $i$: what counts is not the number of oracles but their
total degree. Random placement therefore needs $k^\star$ just above $f$,
while placing them on the highest-degree nodes needs only $k^\star\approx f\,\bar d/d_{\max}$, with $\bar d$
the mean degree and $d_{\max}$ the largest (Fig.~\ref{fig:phase}, within ${\approx}5\%$). This is the placement counterpart of the
continuous-gain synergy threshold of \cite{Synergy}.

\emph{Real agents confirm it.} The leverage advantage is not only a feature of the idealized dynamics. In a
content-free opinion-consensus swarm of real LLM agents (Qwen3-1.7B, $N{=}25$ on a heavy-tailed
Barab\'asi--Albert graph with $\bar d/d_{\max}=0.26$, $40$ trials), correctors placed on the highest-degree
nodes reach truth-consensus at $k^\star{\approx}1.7$, while random placement needs $k^\star{\approx}7$
(Fig.~\ref{fig:placement}), a $4\times$ saving that matches the predicted ratio $\bar d/d_{\max}$.
Isolating the dynamics is what makes the effect visible: on factual questions a grounded agent leans
on its own evidence rather than its neighbours, washing the placement effect out; the content-free vote
exposes the majority-copy mechanism the threshold describes.

\emph{The correctors can be real reasoning agents, not stubborn pins.} The swarm above fixes each corrector to
the truth by fiat. We can instead let every corrector be an actual reasoning agent: a Qwen3-4B model that reads
the question with its grounding evidence, thinks in a chain of thought, and commits to an answer. Its
reliability is then whatever the model achieves, an emergent $\bar q=0.893$ on the hard hallucination band
rather than a knob (\texttt{swarm\_llm\_correctors.py}; $N{=}25$, $f{=}6$, $30$ question draws from a
$200$-item pool, truth balanced across A and B within each corrector's ten-sample batch so position is not a
confound). That reliability is bimodal: $23$ of the $30$ items are answered unanimously and the rest fall
between $0.3$ and $0.9$. Seating these real correctors on the high-degree nodes reaches the $P{=}1/2$
majority-crossing with much less budget than random placement: $k^\star{=}3.8$ under leverage versus
$k^\star{=}10$ under random. The separation is significant even at this size (leverage over random, $z{=}2.98$
at $k{=}4$ and $z{=}2.36$ at $k{=}6$; Wilson $95\%$ intervals in Fig.~\ref{fig:placement}). The leverage curve
reaches the coin-flip level rather than certainty, so we read $k^\star$ as the budget to parity, not to
truth. Against an i.i.d.\ Bernoulli$(\bar q)$ pin, the synthetic fallible corrector of
Fig.~\ref{fig:swarm_mech}a, the real agents match under random placement (both cross only at the boundary
$k{=}10$). Their reasoning errors are, however, correlated by question difficulty rather than
independent, as documented for LLMs more broadly~\cite{CorrErr}; the per-trial correctness variance is $4.7\times$ the i.i.d.\ value (intraclass
correlation $\rho\approx0.41$), so on an easy item every corrector agrees and the hubs carry the truth together.
At $n{=}30$ this correlation does not resolve a significant advantage over the i.i.d.\ proxy under leverage, so
we claim only that real reasoning correctors reproduce the leverage effect, not that they beat the synthetic
model.

\begin{figure}[t]
  \centering
  \includegraphics[width=0.6\linewidth]{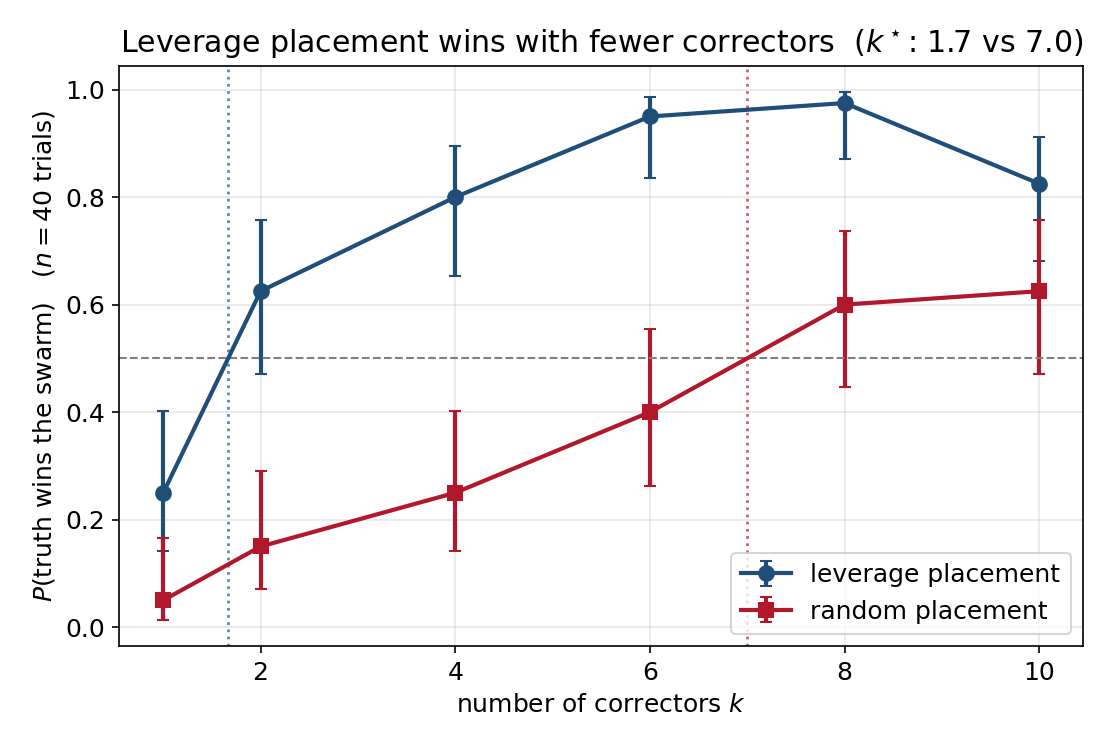}
  \caption{\textbf{Leverage placement in a real LLM swarm.} $P(\text{truth wins})$ versus corrector count
  $k$ for high-degree (leverage) and random placement in a content-free opinion-consensus swarm
  (Qwen3-1.7B, $N{=}25$, $f{=}6$ false seeds, $40$ trials; Wilson $95\%$ intervals). Leverage crosses
  $P{=}1/2$ at $k^\star{\approx}1.7$ versus ${\approx}7$ for random, matching the predicted
  $\bar d/d_{\max}$.}
  \label{fig:placement}
\end{figure}

\emph{Spread, or concentrate?} The two stages advise along orthogonal axes, and ``concentrate''
means different things in each. Stage~A sets the granularity of spend: under a concave cost--quality
law, split the budget into many medium oracles rather than a few strong ones. Stage~B sets the
position: seat whatever oracles one buys on the highest-leverage (highest-degree) nodes. Stage~A's
``concentrate'' (few strong oracles, under convex $w$) is a claim about strength per oracle; Stage~B's
``concentrate on high-degree'' is a claim about which nodes to occupy, so the two recommendations are not in
direct conflict. They compose under a precise condition, which we can now state because the fallible
generalization below (Proposition~\ref{prop:fallible}) supplies the missing ingredient: Stage-B oracles
with a cost-coupled reliability rather than infallible hard pins. By Corollary~\ref{cor:compose}, a Stage-A
placement wins the Stage-B cascade with fallible correctors of reliability $q_i$ (their measured accuracy) exactly when the
reliability-weighted balance holds and no single fallible corrector's degree exceeds the residual
truth margin it leaves. Spreading the budget over many medium correctors keeps that safety margin satisfied,
so ``many medium correctors on high-degree nodes'' is composition-safe; concentrating a medium-reliability
corrector on one dominant hub is not: it broadcasts its own error rate to the whole swarm.

\begin{proposition}[fallible, strength-weighted correctors]\label{prop:fallible}
Let each corrector $i\in R$ hold the truth with reliability $q_i\in(\tfrac12,1]$ and each seed $j\in F$ hold
the falsehood with reliability $q_{\mathrm{seed}}$, so a pinned node emits mean spin $2q-1$. In the
dense-graph regime of Theorem~\ref{thm:stageB} ($pN\ge C\log N$), the free population reaches the
truth-consensus if and only if
the reliability-weighted influence balance
\begin{equation}\label{eq:wbalance}
  \sum_{i\in R} d_i\,(2q_i-1)\ >\ \sum_{j\in F} d_j\,(2q_{\mathrm{seed}}-1)
\end{equation}
holds and no single corrector dominates the free agents it feeds: $d_i^{\mathrm{free}}\le \mu_{-i}$, where
$d_i^{\mathrm{free}}$ is the number of free agents corrector $i$ feeds and $\mu_{-i}$ is the net signed margin
those agents receive from all other pins.
\end{proposition}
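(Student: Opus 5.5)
The plan is to reduce the fallible cascade to the infallible dense-graph analysis of Theorem~\ref{thm:stageB} by replacing each hard pin with its mean spin. A corrector $i$ with reliability $q_i$ contributes to each neighbour's local field a $\pm1$ spin whose mean is $2q_i-1$, independent of the free dynamics. The same holds for a seed, with mean $-(2q_{\mathrm{seed}}-1)$.

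In the regime $pN\ge C\log N$, the local field at a free agent $v$ is
\[
h_v=\sum_{u\in N(v)\setminus(R\cup F)}x_u+\sum_{i\in R\cap N(v)}s_i+\sum_{j\in F\cap N(v)}s_j .
\]
I would write it as the mean-field drift from the free population, plus the pinned mean contribution, plus a fluctuation term. Bernstein and Chernoff bounds over the edges, together with a union bound over the $N$ agents, control the fluctuation. Summing the pinned contributions over all free agents, weighted by how often each pin is sampled, yields the aggregate truth bias $\sum_{i\in R}d_i(2q_i-1)-\sum_{j\in F}d_j(2q_{\mathrm{seed}}-1)$. This is exactly the degree-weighted balance of Theorem~\ref{thm:stageB}, with $d_i$ replaced by $d_i(2q_i-1)$.

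Next I would copy the magnetization recursion from the proof of Theorem~\ref{thm:stageB}. Reliability $p_{\mathrm r}$ acts through the majority map $m\mapsto p_{\mathrm r}\,\Phi(m)$, and the pinned term shifts the fixed point. When \eqref{eq:wbalance} holds, the shift is positive, so the only stable fixed point is the truth-consensus one; the drift then carries the balanced start, and also the established-falsehood start, to $+$ consensus with high probability. When the balance is reversed, the symmetric argument sends the population to falsehood. This gives the necessity of \eqref{eq:wbalance}.

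The local non-domination condition is the genuinely new part, and I expect it to be the main obstacle. Aggregate balance does not exclude a single corrector $i$ whose realized spin is wrong with probability $1-q_i$. Such a corrector flips all $d_i^{\mathrm{free}}$ free agents it feeds whenever its wrong vote outweighs the margin $\mu_{-i}$ those agents receive from the other pins.

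For sufficiency I would argue as follows. If $d_i^{\mathrm{free}}\le\mu_{-i}$, then conditioning on any realization of $s_i$ leaves the neighbourhood field positive, so the drift argument goes through uniformly in that realization. For necessity I would do the converse: on the event $s_i=-1$, which has constant probability $1-q_i>0$, the agents fed by $i$ acquire a negative field. Since that set is a positive fraction when $d_i^{\mathrm{free}}>\mu_{-i}=\Theta(pN)$, truth-consensus then fails with non-vanishing probability.

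The delicate point is making this precise in the dense regime. The quantities $d_i^{\mathrm{free}}$ and $\mu_{-i}$ must be compared at the scale of their concentration, $\Theta(pN)\pm O(\sqrt{pN\log N})$. I would first try to treat the boundary case by absorbing that slack into the strictness of the inequalities.
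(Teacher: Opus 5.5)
Your first step, substituting the mean spin $2q-1$ for each pin and rerunning the argument of \ref{app:stageB}, is essentially all the paper offers here. It gives no separate proof, only that reduction and a one-line heuristic for the dominance condition. The way you then close the argument has two genuine gaps.

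The first gap is your claim that a positive weighted shift leaves the truth consensus as the only stable fixed point. In the dense regime this is false. Write $\mu=\sum_{i\in R}(2q_i-1)-f(2q_{\mathrm{seed}}-1)$ for the reliability-weighted surplus in pin units. In the falsehood state (free agents at $-1$ up to the glitching minority), a free agent's field is about $pN\,[\mu/N-p_{\mathrm r}(1-\rho_R-\rho_F)]$. This is negative unless the surplus is comparable to the free mass. So for every surplus $\mu=o(N)$ both consensus states are stable, and the established-falsehood start is \emph{not} carried to truth; this is the dense analogue of the hysteresis in Proposition~\ref{prop:sparse}. The proposition is a balanced-start statement, and its mechanism is basin selection, not uniqueness. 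At the balanced start, $Nm_0=\sum_{i\in R}s_i-\sum_{j\in F}s_j+O(\sqrt N)$. The realized pin sum must be concentrated \emph{jointly}, by Hoeffding over all $k+f$ draws, to within $O(\sqrt{(k+f)\log N})$ of $\mu$. Once $\mu\ge C'\sqrt{N\log N/p}$, the one-step alignment of \ref{app:stageB} sends every free agent to $+1$, which is an absorbing state. Strictness of \eqref{eq:wbalance} cannot supply that margin: with $\mu=O(1)$ the $O(\sqrt N)$ free-spin fluctuation makes the outcome a coin flip. So the \emph{if} direction is provable only with the margin of Theorem~\ref{thm:stageB} built in.

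The second gap is the dominance condition, where your mechanism (and the paper's heuristic) does not operate on $G(N,p)$. A single pin changes each neighbour's field by $2$. A free agent's field, by contrast, has magnitude at least of order $\sqrt{pN\log N}$ whenever the alignment step fires, and $\Theta(pN)$ once the swarm is aligned. So $s_i=-1$ cannot give $i$'s neighbours a negative field, and the truth consensus is stable whatever $s_i$ is. Separately, $d_i^{\mathrm{free}}/N\approx p$ is not a positive fraction when $p\to0$. Conditioning on one $s_i$ at a time also leaves the joint deviation of the other draws uncontrolled.

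What is true is that in this regime the condition has no independent content. We have $d_i^{\mathrm{free}}\le pN(1+o(1))$, while under the margin above $\mu_{-i}\approx pN(\mu-(2q_i-1))\gg pN$. Hence $d_i^{\mathrm{free}}\le\mu_{-i}$ holds automatically whenever truth provably wins. It can fail only when $\mu=O(1)$, and there truth already fails to win with high probability because of the balanced-start fluctuation, not because of corrector $i$. Your concern about comparing $d_i^{\mathrm{free}}$ with $\mu_{-i}$ at scale $\sqrt{pN\log N}$ is therefore misplaced. The delicate scale is the $\sqrt{N\log N/p}$ surplus in the balance. A dominance condition with real force needs hub-dominated heterogeneous graphs, which lie outside the hypotheses of Theorem~\ref{thm:stageB}.
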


We identify $q_i$ with the measured $2$-AFC accuracy of Section~\ref{sec:exp}. The balance has three readings.
On $K_N$ it reduces to a sum-of-margins test, $\sum_{i\in R}(2q_i-1)>\sum_{j\in F}(2q_{\mathrm{seed}}-1)$, so
extra correctors help only when their reliability margins beat the seeds'. As $q_i\to1$ it factorizes to the
infallible balance $D_R>D_F$ (\ref{app:stageB}). The dominance condition is what makes composition safe: if it
fails, a corrector that draws wrong flips the whole free population, so the consensus tracks that corrector's
reliability rather than the truth.

\begin{corollary}[when the two stages compose]\label{cor:compose}
A Stage-A budgeted placement $R$ also wins the
Stage-B cascade iff \eqref{eq:wbalance} holds and $d_i^{\mathrm{free}}\le \mu_{-i}$ for every fallible
corrector. The concave-$w$ ``spread'' prescription of Theorem~\ref{thm:curv} keeps the second condition
satisfied; a high-degree node given only medium reliability violates it and can invert the balance.
\end{corollary}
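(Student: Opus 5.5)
The plan is to obtain Corollary~\ref{cor:compose} as a specialization of Proposition~\ref{prop:fallible} to the particular set $R$ produced by the Stage-A budgeted placement, plus a separate check that the concave-$w$ spreading prescription satisfies the dominance hypothesis.

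\emph{The ``iff''.} This step is nearly immediate. Proposition~\ref{prop:fallible} characterizes, for \emph{any} pinned set $R$ with reliabilities $q_i$ in the dense regime $pN\ge C\log N$, exactly when the free population reaches truth-consensus. A Stage-A placement is just such a set: its nodes and costs are chosen by Theorem~\ref{thm:knapsack}, and each $q_i$ is the measured accuracy at cost $c_i$. Nothing in Proposition~\ref{prop:fallible} depends on how $R$ was selected, so the equivalence transfers verbatim once we identify $q_i$ with the reliability of the corrector bought at cost $c_i$. The only point to verify is that the Stage-A candidate set and graph meet the dense-graph hypothesis of Theorem~\ref{thm:stageB}. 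I would state this as a standing assumption rather than derive it.

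\emph{The ``spread keeps dominance'' claim.} Here the work is quantitative. Under Theorem~\ref{thm:curv}(i), concave $w$ gives the equal split $c_i=B/m$ over many oracles. Each corrector is then one of $m$ nearly equal contributors, so the signed margin $\mu_{-i}$ contributed by the others on the free agents that $i$ feeds scales like the total truth margin minus a $1/m$ share. In the dense regime (Erd\H{o}s--R\'enyi-like, $pN\ge C\log N$), concentration bounds (Chernoff plus a union bound over the $N$ nodes) give $d_i^{\mathrm{free}}\approx pN_{\mathrm{free}}$. They also give, for each free agent, a received margin of roughly $p\bigl(\sum_{j\ne i}(2q_j-1)-\sum_{F}(2q_{\mathrm{seed}}-1)\bigr)$ summed appropriately. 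So the dominance inequality reduces to the balance \eqref{eq:wbalance} holding with room to spare after deleting any one corrector. I would phrase this as: if the balance margin exceeds $\max_i d_i(2q_i-1)$, dominance holds. Spreading makes that maximum a $1/m$ fraction of the total, so for $m$ large the condition follows from the balance itself.

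\emph{The counterexample direction.} For ``a high-degree node given only medium reliability violates it'', I would construct an explicit instance. Put a single corrector on a hub with $d_i^{\mathrm{free}}$ comparable to $N$ and $q_i$ near $1/2$, and keep the remaining margin small. Then $d_i^{\mathrm{free}}>\mu_{-i}$, and on the event that the hub draws wrong the balance inverts, which is what ``can invert'' requires.

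The main obstacle is the middle step. Turning ``spread'' into a verified inequality $d_i^{\mathrm{free}}\le\mu_{-i}$ needs the Stage-A placement to also satisfy \eqref{eq:wbalance} with a margin proportional to one corrector's weight. This does not follow from coherence-optimality alone. I would add it explicitly as a margin hypothesis, of the form ``balance margin $\ge\max_i d_i(2q_i-1)$'', rather than try to deduce it from Theorem~\ref{thm:curv}.
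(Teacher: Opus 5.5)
Your route for the ``iff'' matches the paper's. The corollary is Proposition~\ref{prop:fallible} applied to the particular set $R$ that Stage~A returns, with $q_i$ read as the measured accuracy at cost $c_i$, and nothing in that proposition depends on how $R$ was chosen. For the other two clauses the paper gives only an informal reading: spreading keeps each corrector's share of the residual margin small, while a medium-reliability hub broadcasts its own error rate.

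You are right that Theorem~\ref{thm:curv} cannot deliver the spreading clause by itself, but there is a genuine gap: the hypothesis you substitute does not imply dominance.
\begin{itemize}
\item The hypothesis ``balance margin $\ge\max_i d_i(2q_i-1)$'' only ensures that \emph{deleting} corrector $i$ leaves a nonnegative margin. The condition $d_i^{\mathrm{free}}\le\mu_{-i}$ asks something different: that the other pins outvote $i$ when it \emph{draws wrong}.
\item By your own computation, a free agent fed by $i$ receives about $p\,\Gamma_{-i}$ from the other pins, where $\Gamma_{-i}=\sum_{j\in R\setminus\{i\}}(2q_j-1)-\sum_{j\in F}(2q_{\mathrm{seed}}-1)$, and it receives $-1$ from a wrong $i$. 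Summing over the agents $i$ feeds gives $\mu_{-i}\approx d_i^{\mathrm{free}}\,p\,\Gamma_{-i}$. So dominance reads $p\,\Gamma_{-i}\ge1$, not $\Gamma_{-i}\ge0$.
\item The claim that ``for $m$ large the condition follows from the balance itself'' is also false. The $1/m$ share is measured against the gross corrector mass, whereas $\Gamma_{-i}$ is a net quantity that a strict balance can leave arbitrarily small.
\item With concentrated degrees, spreading shrinks neither $d_i^{\mathrm{free}}\approx pN$ nor a corrector's unit vote; it only lowers each $q_i$.
\item Your closing paragraph concedes that an extra hypothesis is needed, which contradicts that sentence.
\end{itemize}

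You can close the step instead of assuming it. In the dense regime where you invoke Proposition~\ref{prop:fallible}, Theorem~\ref{thm:stageB} delivers its high-probability verdict at a (reliability-weighted) surplus of order $\sqrt{N\log N/p}$ pins. Removing one corrector lowers that count by at most $1$, so at that surplus $p\,\Gamma_{-i}\gtrsim\sqrt{pN\log N}\gg1$ for every $i$. Dominance therefore holds automatically there, whatever the split of the budget, which is a correct and stronger form of the spreading clause.

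The same bookkeeping shows your hub counterexample cannot live inside $G(N,p)$. All degrees there are $\approx pN$, so there is no hub. One corrector's wrong draw moves the signed pinned count by at most $2$, which is negligible against the $O(\sqrt N)$ balanced-start fluctuation that already sets the transition width. Dominance genuinely bites only on heterogeneous graphs in which a hub feeds many low-degree agents, as in the Barab\'asi--Albert setting of the experiments. Build the counterexample there, and say so explicitly, since it leaves the concentrated-degree regime in which you established the ``iff''.
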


\emph{Real agents confirm the mechanism.} Two checks in a real weak-LLM swarm bear out
Proposition~\ref{prop:fallible} (Fig.~\ref{fig:swarm_mech}). (a)~When the correctors are made fallible at a
tunable reliability $q$ (free agents Qwen3-1.7B copying neighbours), the truth-win probability rises
linearly in the effective strength $2q-1$ ($r=0.98$, crossing $\tfrac12$ near $q\approx0.7$): the
measured reliability enters the swarm exactly as the $2q-1$ weight of \eqref{eq:wbalance}, closing the
accuracy-to-pin-strength proxy gap empirically. Feeding in the actual measured hard-band strengths
rather than a synthetic knob reproduces this end to end: truth-win probability climbs with the ladder value
($0.70$ for a $0.6$B corrector at $w{=}0.86$ up to ${\sim}0.9$ at $14$B, $w{=}0.96$; correlation $0.94$), so
the measured $w(c)$ predicts the real swarm outcome. (b)~The leverage-placement advantage is present only when
agents actually copy their neighbours: as grounding rises from a content-free vote to full evidence, the
gap between high-degree and random placement collapses from $+0.75$ to $0$, since grounded agents self-correct
and graph position stops mattering.

\begin{figure}[t]
  \centering
  \includegraphics[width=0.98\linewidth]{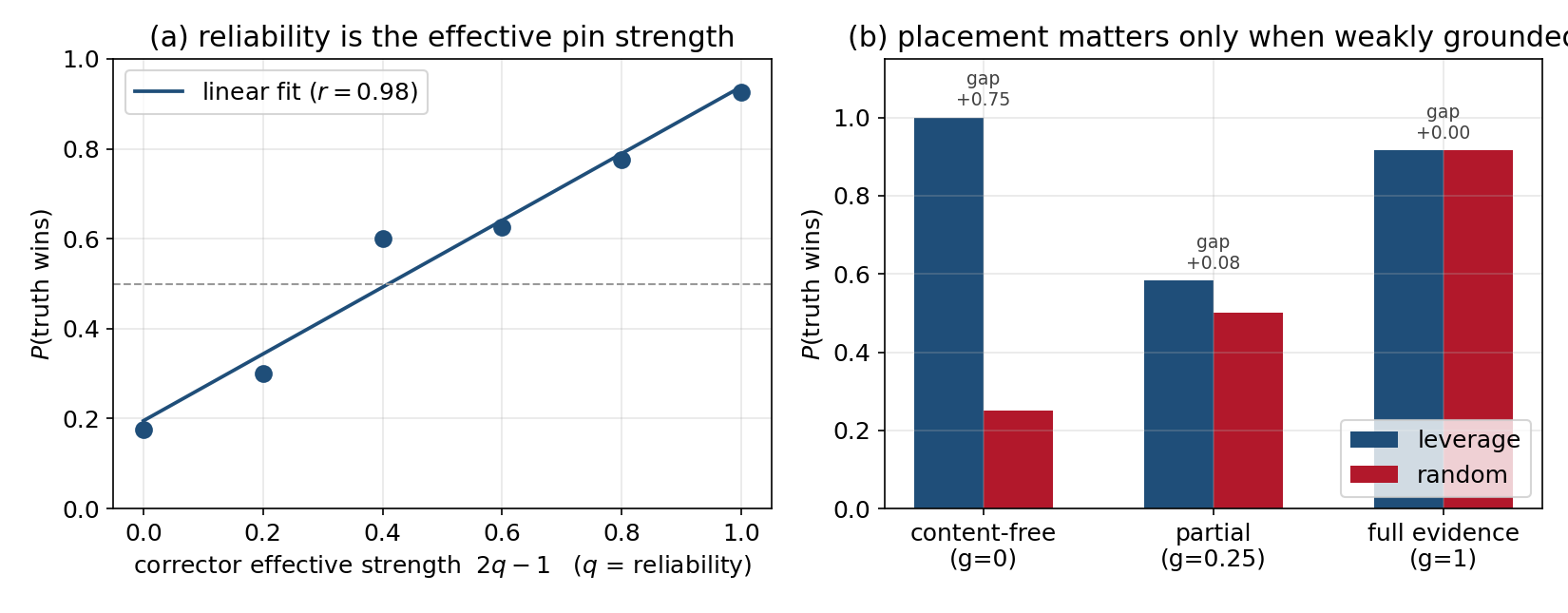}
  \caption{\textbf{Two mechanism checks in a real LLM swarm.} (a)~With fallible correctors, $P(\text{truth
  wins})$ is linear in the effective strength $2q-1$ ($q$=reliability, $r{=}0.98$), so measured reliability is
  the effective pin weight of \eqref{eq:wbalance}. (b)~The leverage$-$random placement gap vanishes as agents
  become more grounded (content-free $\to$ partial $\to$ full evidence), confirming that placement matters
  only in the majority-copy regime. Swarms of $N{=}25$ on a Barab\'asi--Albert graph; panel (a) $k{=}f{=}6$,
  $40$ trials; panel (b) $k{=}4$, $f{=}6$, $12$ questions.}
  \label{fig:swarm_mech}
\end{figure}

\begin{figure}[t]
  \centering
  \includegraphics[width=0.95\linewidth]{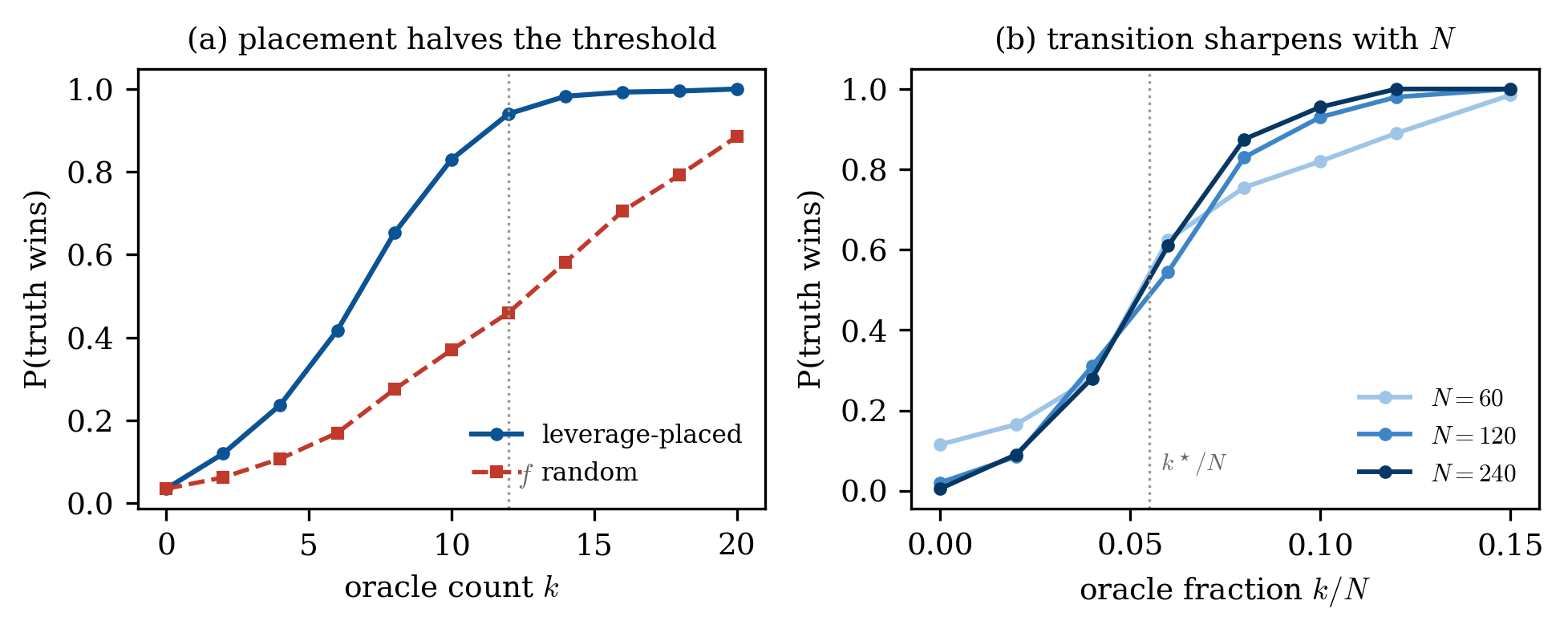}
  \caption{\textbf{The budget--correctness phase transition (Stage~B).} Majority-cascade model: $f$
  false-pinned seeds versus $k$ truth-pinned oracles on an Erd\H{o}s--R\'enyi swarm. (a)~$\mathrm
  P(\text{truth wins})$ jumps sharply with the oracle count, and leverage placement reaches certainty at
  roughly half the oracles random placement needs ($k^\star\approx f\bar d/d_{\max}$ versus $f$).
  (b)~against the oracle fraction $k/N$ the curves steepen with $N$ at an $N$-independent critical
  fraction, a genuine thermodynamic-limit threshold, not a smooth crossover. Estimated over $400$ trials with
  common random numbers across $k$ (each scenario's graph, seeds, and start reused, only the corrector set
  varying), so every point is unbiased and the curves are smooth.}
  \label{fig:phase}
\end{figure}

\begin{proposition}[exact correctness threshold on the complete graph]\label{prop:stageB}
Consider $K_N$ with $k$ agents pinned to truth $(+1)$, $f$ pinned to a falsehood $(-1)$, and the rest free,
each free agent updating synchronously to the sign of its neighbour-sum with reliability
$p_{\mathrm r}\in(0,1)$ from a balanced start. Then truth wins the final consensus with high probability when
$k-f\gg\sqrt N$ and loses when $f-k\gg\sqrt N$. The transition has width $O(\sqrt N)$ in the count, so the
critical fraction $\rho_R=\rho_F$, that is $k>f$, is sharp as $N\to\infty$ (Fig.~\ref{fig:phase}b). On a
general graph, weighting each pin by its degree extends the balance to $\sum_{i\in R}d_i>\sum_{j\in F}d_j$.
\end{proposition}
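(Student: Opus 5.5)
The plan is to reduce the complete-graph cascade to a one-dimensional Markov chain in the free-agent magnetization and read off the threshold from its drift. On $K_N$ every free agent sees the same neighbour-sum, up to its own spin. Let $m_t=\sum_{j\text{ free}}x_j(t)$, so the field at a free agent is $h_t=m_t+(k-f)-x_j(t)$. All free agents then compute the same sign $s_t=\operatorname{sgn}(m_t+k-f)$, except possibly when $|m_t+k-f|\le1$. Under the synchronous update each free agent independently adopts $s_t$ with probability $p_{\mathrm r}+(1-p_{\mathrm r})/2=(1+p_{\mathrm r})/2$ and the opposite spin otherwise. Hence, given $s_t$, the next magnetization $m_{t+1}$ is a sum of $n=N-k-f$ i.i.d.\ $\pm1$ variables with mean $s_t\,p_{\mathrm r}\,n$ and variance at most $n$.

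\textbf{Step 1 (first step decides the sign).} At $t=0$ the balanced start gives $m_0$ with mean $0$ and standard deviation $\sqrt n\le\sqrt N$. By Chebyshev (or Hoeffding), $|m_0|\le\omega\sqrt N$ with probability $1-o(1)$ for any $\omega\to\infty$. If $k-f\gg\sqrt N$, we choose $\omega$ growing slowly enough that $k-f>\omega\sqrt N+1$. Then $m_0+k-f>1$ w.h.p., so $s_0=+1$.

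\textbf{Step 2 (absorption into the truth phase).} Given $s_0=+1$, Hoeffding gives $m_1\ge p_{\mathrm r}n-\omega\sqrt n$ w.h.p. We then check that $m_1+k-f>1$. This holds since $p_{\mathrm r}n+k-f\ge p_{\mathrm r}(N-k-f)+(k-f)$, which is of order $N$ in the regime of interest. The same argument iterates: as long as $s_t=+1$, the fluctuation needed to flip the sign is of order $p_{\mathrm r}n$, far above $\sqrt n$. A Hoeffding tail $e^{-cN}$ union-bounded over any polynomial horizon keeps $s_t=+1$ w.h.p. The free population therefore sits in the truth-majority state, which is what ``truth wins the final consensus'' means under noise $p_{\mathrm r}<1$: a stationary majority of order $p_{\mathrm r}n$ aligned with $+1$. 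The case $f-k\gg\sqrt N$ is symmetric under $x\mapsto-x$. Because the only randomness that matters is $m_0$ at scale $\sqrt N$, the window in which the outcome is uncertain is $|k-f|=O(\sqrt N)$. Dividing by $N$, the critical fraction is $\rho_R=\rho_F$ with width $O(N^{-1/2})\to0$, which gives sharpness.

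\textbf{Step 3 (general graph).} Here I would only give the heuristic extension. Linearize the influence of pins at the first step: a free node $u$ receives $\sum_{i\in R}A_{ui}-\sum_{j\in F}A_{uj}$ from pins. Summing over $u$, the aggregate pin bias injected into the free population is $D_R-D_F$, up to pin–pin edges. The sign of the aggregate initial drift is therefore that of $\sum_{i\in R}d_i-\sum_{j\in F}d_j$. The rigorous version is deferred to Theorem~\ref{thm:stageB} in the dense regime.

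\textbf{Main obstacle.} The delicate point is the boundary effect in Step 2 when $k-f$ is only slightly above $\sqrt N$ and $p_{\mathrm r}n$ is small. For example, if $k+f$ is close to $N$, the term $p_{\mathrm r}n$ need not dominate. I would first handle this by a case split. If $n=o(N)$, the pins alone fix the sign, since $|m_t|\le n<k-f$ eventually. Otherwise, $p_{\mathrm r}n=\Theta(N)\gg\sqrt N$ and the argument above applies.
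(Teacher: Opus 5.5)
Your argument is the paper's own. On $K_N$ every free agent sees the global field up to its own spin, so the dynamics collapse to a scalar update. Its first sign is set by $m_0+(k-f)$, with $m_0=O(\sqrt N)$ from the balanced start, and after that the free majority locks to that sign. Your Hoeffding/union-bound treatment of the lock-in is more explicit than the paper's one-line ``the update locks to $\operatorname{sign}(S_0)$''. Your general-graph step is only a heuristic, and so is the paper's: its proof of this proposition does not address that clause at all, and Theorem~\ref{thm:stageB} covers only the dense regime, where degree concentration reduces the degree balance to the count balance.

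One slip needs fixing in your ``main obstacle'' paragraph. The claim that $n=o(N)$ gives $|m_t|\le n<k-f$ is false; take $n=N^{0.9}$ and $k-f=N^{0.6}$. Also, the tail $e^{-cN}$ you invoke in Step 2 holds only when $n=\Theta(N)$. The correct case split is on $n$ versus $k-f$:
\begin{itemize}
\item If $n<k-f-1$, then $|m_t|\le n$ keeps $m_t+k-f>1$ deterministically at every step.
\item If $n\ge k-f-1$, then $n\gg\sqrt N$. Hoeffding bounds the per-step flip probability by $\exp\bigl(-(p_{\mathrm r}n+k-f-1)^2/(2n)\bigr)\le\exp(-p_{\mathrm r}^2n/2)$, which is $\exp(-\omega(\sqrt N))$ and still survives a union bound over any polynomial horizon.
\end{itemize}
So the boundary regime is not a real obstacle.
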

\begin{proof}
Every free agent sees essentially the same neighbour-sum, equal to the global signed total $S=\sum_j x_j$
(the ``field'') up to an $O(1)$ correction, so all free agents flip the same way, to
$\operatorname{sign}(S)$, and the system reduces to a single scalar update. From the balanced start the
initial field is $S_0=(k-f)+O(\sqrt N)$, the $O(\sqrt N)$ coming from the free spins and the reliability
glitches, and the update locks to $\operatorname{sign}(S_0)$. The stated thresholds follow.
\end{proof}

Beyond the complete graph, the same threshold holds rigorously once the swarm mixes well.

\begin{theorem}[threshold on dense random graphs]\label{thm:stageB}
Run the majority cascade of Proposition~\ref{prop:stageB} on the Erd\H{o}s--R\'enyi graph $G(N,p)$ with
$pN\ge C\log N$, at reliability $p_{\mathrm r}\in(0,1)$ from a balanced start, with $C,C'$ below absolute
constants. Then truth wins the consensus with high probability once the oracles beat the seeds by a vanishing
margin,
\[
  k-f\ \ge\ C'\sqrt{N\log N/p},
\]
and the falsehood wins when the margin has the opposite sign. When $pN=\omega(\log N)$ this margin is $o(N)$,
so the critical fraction is simply $\rho_R=\rho_F$: any fixed surplus of oracles over seeds wins as
$N\to\infty$, with transition width $O(\sqrt N)$ in the count. At the sparse boundary $pN=\Theta(\log N)$ the
margin is $\Theta(N)$, so there the balance is sharp only up to a constant. On a graph with heterogeneous
degrees the same balance holds by degree, $D_R>D_F$.
\end{theorem}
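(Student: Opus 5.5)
The plan is to reduce the dense Erd\H{o}s--R\'enyi cascade to the complete-graph argument of Proposition~\ref{prop:stageB}. We replace ``every free agent sees the global field'' with ``every free agent sees $p$ times the global field, up to a uniform concentration error''. Write the field at time $t$ as $S_t=\sum_j x_j(t)$. The local field at a free node $v$ is $h_v(t)=\sum_{u\sim v}x_u(t)$. The approach has three steps: a one-step concentration bound for $h_v$ around $pS_t$, uniform over all $v$; a first step that locks every free agent to $\operatorname{sign}(S_0)$; and an absorbing argument showing that the resulting near-consensus is stable.

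\emph{Step 1 (concentration).} At $t=0$ the graph and the free spins are independent. So for fixed $v$, $h_v(0)-p\,(S_0-x_v)$ is a sum of $N-1$ independent centred terms, each bounded by $1$, with variance at most $p$. Bernstein's inequality gives $|h_v(0)-pS_0|\le C_1\sqrt{pN\log N}+O(\log N)$ with probability $1-N^{-3}$. A union bound over $v$ makes this uniform, and the hypothesis $pN\ge C\log N$ makes the $\log N$ term dominated by the square-root term.

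\emph{Step 2 (first update).} From the balanced start, $S_0=(k-f)+O(\sqrt{N\log N})$ with high probability. Under $k-f\ge C'\sqrt{N\log N/p}$ this gives $pS_0\ge C'\sqrt{pN\log N}\,(1-o(1))$, which beats the Step~1 error for a suitable constant $C'$. Hence every free agent's noiseless majority vote is $+1$. Reliability glitches flip an independent $\mathrm{Bin}(N,1-p_{\mathrm r})$ fraction of free agents to random values. I would read ``reliability $p_{\mathrm r}$'' so that glitch outputs are symmetric random, giving a positive mean magnetization $p_{\mathrm r}\cdot(\text{free count})$. After one step, $S_1\ge p_{\mathrm r}N/2$ with high probability.

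\emph{Step 3 (absorption).} This is the main obstacle. After step~1 the spins depend on the graph, so Step~1 cannot be reapplied naively. I would avoid the dependence by a uniform statement over all configurations with large magnetization. Using the expander mixing lemma for $G(N,p)$, whose adjacency deviation satisfies $\|A-p\mathbf 1\mathbf 1^\top\|=O(\sqrt{pN})$ with high probability when $pN\ge C\log N$, the number of nodes $v$ with $|h_v-pS|>\delta pN$ is $O(1/(\delta^2 p))$ for every $\pm1$ vector simultaneously. With $S\ge cN$, all but $O(1/p)$ free nodes then vote $+1$. Adding the $\mathrm{Bin}$ glitch count shows that $S_{t+1}\ge cN$ is preserved, so truth holds the consensus, in the sense of a majority at level $p_{\mathrm r}$, at every later step.

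The reversed-sign case follows by the $\pm1$ symmetry. The $o(N)$ and $\Theta(N)$ readings follow by substituting $pN=\omega(\log N)$ or $pN=\Theta(\log N)$ into the margin. For heterogeneous degrees I would redo Step~1 with $\mathbb E h_v$ computed under an inhomogeneous (Chung--Lu type) model. There the contribution of each pin is proportional to its degree, so the sign condition becomes $D_R>D_F$, with the same concentration argument. I would state this last part as a heuristic extension unless the degree model is fixed.
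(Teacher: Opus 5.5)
Your proposal is correct. Its first two steps are the paper's own argument. The appendix proof also relies on the graph being independent of the balanced start. It bounds each free neighbour-sum by a one-step concentration inequality (Hoeffding with tail $2e^{-t^2/(2d_i)}$, which plays the role of your Bernstein bound), and takes a union bound over free nodes to conclude that one update aligns every free agent with $\operatorname{sign}(k-f)$.

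You diverge at the absorption step. The paper closes by observing that the configuration in which every free agent agrees with the pinned majority is a fixed point. That configuration is determined by $(R,F)$ alone, so one further union bound verifies it, and the graph--spin dependence you flag never arises. With glitches, the paper's sentence tacitly needs one more remark, which also removes your obstacle cheaply. On the good event the post-update state is ``all free agents $+1$ except fresh glitches'', and this state is independent of the graph. So the $t=0$ concentration can simply be reapplied step by step, with a union bound over a polynomial horizon.

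Your expander-mixing route controls every $\pm1$ configuration at once through $\|A-p\mathbf 1\mathbf 1^\top\|=O(\sqrt{pN})$. This buys uniformity: no dependence bookkeeping, arbitrarily long horizons, and robustness to any perturbation that keeps $S\ge cN$. The price is a heavier input (a Feige--Ofek type norm bound at $pN\ge C\log N$) and a weaker conclusion (all but $O(1/p)$ free agents vote correctly, rather than all of them).

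One repair is needed. $S_1\ge p_{\mathrm r}N/2$ is false in general; it fails, for example, for $k\approx f\approx N/2$ with surplus $o(N)$. What holds is $S_1\ge (k-f)+p_{\mathrm r}(N-k-f)-o(N)$. So your Step~3 needs a free fraction, or a surplus $k-f$, of order $N$, whereas the fixed-point check covers the whole range.

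Your reading of ``wins'' as a majority at level $p_{\mathrm r}$ matches what the paper actually proves. So does treating $D_R>D_F$ as a heuristic, since the paper's argument only checks $\operatorname{sign}(k-f)=\operatorname{sign}(D_R-D_F)$ on $G(N,p)$ itself.
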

\begin{proof}
See \ref{app:stageB}.
\end{proof}

Sparsity alone does not break the threshold (a sparse expander still mixes); modularity does. When
false seeds cluster in one part of the network, the cascade is confined and resolves locally.

\begin{proposition}[community threshold on modular graphs]\label{prop:modular}
Partition the graph into internally dense communities $C_1,\dots,C_K$ and take the decoupling limit of
vanishing inter-community coupling. Then each community $C_\ell$ reaches truth with high probability if and
only if its internal influence balance holds,
\[
  \sum_{i\in R\cap C_\ell}d_i\ >\ \sum_{j\in F\cap C_\ell}d_j ,
\]
and the global consensus is the size-weighted majority of the communities.
\end{proposition}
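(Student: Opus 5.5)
The plan is to prove Proposition~\ref{prop:modular} in two layers. First, in the decoupling limit the cascade on each community is an autonomous majority cascade, so Theorem~\ref{thm:stageB} (or Proposition~\ref{prop:stageB} for complete communities) applies to it. Second, the global consensus is aggregated from the community outcomes.

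\emph{Step 1: decoupling.} Write the field seen by a free agent $j\in C_\ell$ as $h_j=h_j^{\mathrm{in}}+h_j^{\mathrm{out}}$. Here $h_j^{\mathrm{in}}$ sums over neighbours inside $C_\ell$ and $h_j^{\mathrm{out}}$ over neighbours in other communities. Let $\eta$ be the maximal fraction of inter-community edges at any node. Then $|h_j^{\mathrm{out}}|\le \eta d_j$. Internal density together with the concentration argument behind Theorem~\ref{thm:stageB} gives $|h_j^{\mathrm{in}}|=\Omega(d_j)$ whenever the internal balance is strict by a margin. So for $\eta$ below that margin, $\operatorname{sign}(h_j)=\operatorname{sign}(h_j^{\mathrm{in}})$ at every step, with high probability. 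The update of $C_\ell$ then coincides, step by step and under a common coupling of the reliability glitches, with the isolated cascade on the induced subgraph $G[C_\ell]$, where the pins are $R\cap C_\ell$ and $F\cap C_\ell$. A union bound over the finitely many communities (with $K$ fixed, or growing slowly) and over the polynomially many steps keeps the failure probability at $o(1)$.

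\emph{Step 2: local threshold.} Apply Theorem~\ref{thm:stageB}, in its heterogeneous-degree form $D_R>D_F$, to $G[C_\ell]$ with internal degrees. This gives both directions. The community reaches truth with high probability when $\sum_{i\in R\cap C_\ell}d_i>\sum_{j\in F\cap C_\ell}d_j$, and reaches falsehood when the inequality is reversed. In the decoupling limit internal and total degrees agree up to the factor $1-\eta\to1$, which justifies writing $d_i$ in the statement. Equality, or a margin within the $O(\sqrt{|C_\ell|\log|C_\ell|/p})$ window, is excluded in the usual sharp-threshold sense, as in Theorem~\ref{thm:stageB}.

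\emph{Step 3: aggregation.} Once each community has locked to its consensus $s_\ell\in\{\pm1\}$, the global signed total is $\sum_\ell |C_\ell|\,s_\ell$ plus lower-order terms from the pins and glitches. Reading the global verdict as the sign of this total gives the size-weighted majority of the communities.

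The main obstacle is Step 1. The residual inter-community field must not flip agents near the threshold during the transient. Right after the balanced start, $h_j^{\mathrm{in}}$ is only $O(\sqrt{d_j})$, so a fixed $\eta$ is not automatically negligible. I would handle this by taking the limit $\eta\to0$ before $N\to\infty$, or by requiring $\eta d_j=o(\sqrt{d_j})$. With that ordering, Theorem~\ref{thm:stageB} already absorbs perturbations of that size into its $O(\sqrt N)$ window. I would state this ordering of limits explicitly as the meaning of ``decoupling limit.''
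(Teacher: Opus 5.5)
Your Steps~1 and~2 follow the paper's own argument. The paper's entire justification is that each $C_\ell$ is dense and ER-like, so Theorem~\ref{thm:stageB} applies community by community. Your bound $|h^{\mathrm{out}}_j|\le\eta d_j$ and your explicit ordering of limits ($\eta\to0$ before $N\to\infty$, or $\eta d_j=o(\sqrt{d_j})$ near the threshold window) make precise what the paper leaves inside the phrase ``decoupling limit''. That part is sound.

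The step that fails as written is Step~3: the pins are not lower order in the regime the paper works in. The threshold sits at an $N$-independent critical fraction, so $k_\ell=|R\cap C_\ell|$ and $f_\ell=|F\cap C_\ell|$ are $\Theta(|C_\ell|)$. With $n^{\mathrm{free}}_\ell=|C_\ell|-k_\ell-f_\ell$, a community locked to $s_\ell$ has net spin $s_\ell\,p_{\mathrm r}\,n^{\mathrm{free}}_\ell+(k_\ell-f_\ell)+O(\sqrt{|C_\ell|})$.

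The glitches are also $\Theta(N)$. They cannot flip a sign, because they only rescale every free population by the common factor $p_{\mathrm r}$. The pins, however, can. Take $|C_1|=0.52N$, $k_1=0.25N$, $f_1=0.2N$ (so $s_1=+1$) and $|C_2|=0.48N$, $k_2=0$, $f_2=0.01N$ (so $s_2=-1$). Then the global total spin is $(0.04-0.4\,p_{\mathrm r})N<0$ for every $p_{\mathrm r}>0.1$, while $\sum_\ell|C_\ell|s_\ell=0.04N>0$. So the sign of the total spin and the size-weighted majority genuinely disagree.

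The clause therefore needs a definition rather than an error estimate. One option is to read the global verdict over the free agents only. This gives exactly the weights $n^{\mathrm{free}}_\ell$, since the factor $p_{\mathrm r}$ drops out of the sign. The other option is to assume $k_\ell,f_\ell=o(|C_\ell|)$. Theorem~\ref{thm:stageB} permits that only when $p|C_\ell|=\omega(\log|C_\ell|)$, and it excludes the fixed-fraction regime.

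The paper states this aggregation clause without proof. This is therefore the one place where your argument has to add something rather than inherit it from the paper.
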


Each $C_\ell$ is itself dense and ER-like, so this is Theorem~\ref{thm:stageB} applied community by community.
Its consequence for placement is a covering requirement: the budget must win every seeded community on its
own, rather than maximize one global influence score, so the simple dense-graph rule $D_R>D_F$ no longer
suffices under clustered seeding. For weak but nonzero coupling this per-community resolution is observed in
simulation on a stochastic block model.

\begin{proposition}[hysteresis on sparse random-regular graphs]\label{prop:sparse}
On a sparse random $d$-regular graph there is a threshold $\rho_R^\star(\rho_F)>\rho_F$ such that the
truth-consensus is reached with high
probability
\begin{itemize}\itemsep2pt
\item[\textup{(i)}] from a balanced start, if and only if $\rho_R>\rho_F$; and
\item[\textup{(ii)}] from an established-falsehood start, if and only if $\rho_R>\rho_R^\star(\rho_F)$.
\end{itemize}
Since $\rho_R^\star(\rho_F)>\rho_F$, for $\rho_R\in(\rho_F,\rho_R^\star(\rho_F))$ the outcome depends on the
initial configuration; this gap is the hysteresis.
\end{proposition}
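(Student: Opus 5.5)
We prove the two parts separately, using the mean-field reduction behind Theorem~\ref{thm:stageB}. On a random $d$-regular graph, $d$ fixed, the neighbourhood of a typical vertex is a tree up to radius $\Theta(\log N)$. The dynamics are therefore tracked by a scalar recursion for the fraction $m_t$ of free agents at $+1$. Each free agent sees $d$ neighbours. Each neighbour is independently an oracle with probability $\rho_R$, a seed with probability $\rho_F$, or free and at $+1$ with probability $(1-\rho_R-\rho_F)m_t$. This gives a map $m_{t+1}=\Phi(m_t)$, namely a reliability-smoothed majority-of-$d$ function of these neighbour probabilities. First we justify this recursion for a bounded number of rounds with concentration of $m_t$ around the recursion, using local tree-likeness, exchangeability of the free vertices, and a bounded-differences inequality. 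Then we study the fixed points of $\Phi$ on $[0,1]$.

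\emph{Part (i).} We use the symmetry $\Phi_{\rho_R,\rho_F}(1-m)=1-\Phi_{\rho_F,\rho_R}(m)$. At $\rho_R=\rho_F$ the point $m=\tfrac12$ is a fixed point. The balanced start puts $m_0=\tfrac12+O(N^{-1/2})$. Because $\Phi$ is increasing in $\rho_R$ at fixed total pin density, $\Phi(\tfrac12)>\tfrac12$ exactly when $\rho_R>\rho_F$. Monotonicity of $\Phi$ in $m$ then gives $m_t$ increasing toward the largest fixed point $m^+$. We must show that $m^+$ corresponds to truth-consensus, meaning the truth-side fixed point with a positive local field. The converse follows by the same symmetry argument.

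\emph{Part (ii).} With $m_0=0$, the iteration climbs to the smallest fixed point $m^-$ of $\Phi$. Truth wins iff no fixed point lies below $\tfrac12$ on the false side. For majority-of-$d$ with $d\ge3$, $\Phi$ is S-shaped. When the pins are balanced, a stable false-side fixed point exists: $\Phi'(0)$ is small, because a single $+$ neighbour cannot flip a majority. This false fixed point survives a small excess of oracles. We therefore define $\rho_R^\star(\rho_F)$ as the saddle-node value, the infimum of $\rho_R$ at which the low fixed point disappears. This is the value where $\Phi(m)=m$ and $\Phi'(m)=1$ hold simultaneously. We show this infimum is strictly greater than $\rho_F$ by exhibiting, at $\rho_R=\rho_F$, a false fixed point $m^-<\tfrac12$ with $\Phi'(m^-)<1$ and applying the implicit function theorem. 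Below $\rho_R^\star$ the cascade stalls at $m^-$, which is falsehood with high probability. Above it, the iteration passes through $\tfrac12$ and reaches $m^+$.

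The main obstacle is justifying the recursion, because correlations on the sparse graph break the mean-field approximation. The approximation holds only for $O(\log N)$ rounds, and fixed-point convergence is geometric, so this window is enough if we stop at an $\epsilon$-neighbourhood of the fixed point. To go beyond that, we would first try a monotone-coupling argument: majority dynamics with pins is monotone. This lets us sandwich the process between deterministic fixed-point bounds and close the argument from a stable configuration. We expect the paper itself to support the sharp $\rho_R^\star$ only numerically, with an analytic saddle-node characterization.
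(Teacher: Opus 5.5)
Your reduction to the scalar map $\Phi$ is where the argument breaks. Local tree-likeness of the random $d$-regular graph does not make a vertex's $d$ neighbours independent with the population marginal $m_t$, because each neighbour's state at time $t$ was computed from the focal vertex's own state at time $t-1$. This already fails at the second step from the established-falsehood start. Take a free neighbour $u$ of a free vertex $v$. One of $u$'s neighbours, $v$ itself, is known to be at $-1$ at time $0$. So $u$ is at $+1$ at time $1$ with probability $\Pr(\mathrm{Bin}(d-1,\rho_R)\ge\lceil d/2\rceil)$, which is strictly smaller than $m_1=\Pr(\mathrm{Bin}(d,\rho_R)\ge\lceil d/2\rceil)$. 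Hence $m_2$ concentrates around a value strictly below $\Phi(m_1)$, and no bounded-differences argument can rescue a recursion whose mean is wrong. The correct tree-level object is a cavity recursion over $d-1$ children. For a reversible majority update even that does not close into a scalar map; you would need the dynamic cavity method. The discrepancy changes the answer. In the noise-free case with $d=5$ and $\rho_F\to0$, the saddle node of your $\Phi$ sits at $\rho_R\approx0.17$, whereas the dislodging threshold on the actual graph is $\approx0.28$ (the paper's value, matched by simulation). So the ``if and only if'' you would derive in (ii) is false for $\rho_R$ in between.

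The missing idea, which is the paper's route to (ii), is monotonicity of the \emph{trajectory}, not just of the update map. Use noise-free updates from the all-$(-1)$ start, which is the minimal configuration. Then $x(1)\ge x(0)$ holds trivially, and the monotone update propagates it to $x(t+1)\ge x(t)$ for all $t$. So a free node that turns $+1$ stays $+1$, and the cascade is exactly bootstrap percolation with threshold $\lceil d/2\rceil$, oracles initially active and seeds permanently blocked. For that process the $(d-1)$-cavity description is exact, and the activated fraction obeys $z=\rho_R+(1-\rho_R-\rho_F)\Pr(\mathrm{Bin}(d-1,z)\ge\lceil d/2\rceil)$. Its saddle node is $\rho_R^\star$, which the paper takes as rigorous from Balogh--Pittel. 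This also disposes of your $O(\log N)$-round window, since the final state of bootstrap percolation is read off the fixed point rather than from a finite-time approximation. It also explains why the dislodging analysis is done without glitches. For $p_{\mathrm r}<1$ the trajectory is no longer monotone, and the paper observes that glitches destroy the bistability except at very small densities (the window survives only for $\rho\lesssim0.005$ at $p_{\mathrm r}=0.9$). So the stable false fixed point at $\rho_R=\rho_F$, on which your implicit-function step rests, is not available in general. For (i) the paper itself offers only a heuristic bistability argument from its cavity recursion, so your symmetry argument is in the same spirit there. It should, however, be run on the cavity map rather than on $\Phi$.
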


On a sparse graph the field no longer concentrates, so the two starts have distinct mechanisms. Prevention is
tracked by a \emph{cavity recursion} for the fraction
$q$ of free agents that settle at $+1$,
\[
  q=\rho_R+(1-\rho_R-\rho_F)\big[p_{\mathrm r}\Pr(\mathrm{Bin}(d-1,q)>\tfrac{d-1}2)+\tfrac{1-p_{\mathrm r}}2\big].
\]
Near $\rho_R=\rho_F$ and for reliable agents ($p_{\mathrm r}$ near $1$) it has three fixed points, so the
dynamics are bistable and the outcome depends on the starting majority; glitches erode the metastable middle
branch, and the bistable window survives only for $\rho\lesssim0.005$ at $p_{\mathrm r}=0.9$.

Dislodging is instead a monotone \emph{bootstrap percolation}: a free node flips to $+1$ once at least
$\lceil d/2\rceil$ of its neighbours are $+1$, and stays. The activated fraction $z$ obeys
\[
  z=\rho_R+(1-\rho_R-\rho_F)\,\Pr\!\big(\mathrm{Bin}(d-1,z)\ge\lceil d/2\rceil\big),
\]
whose saddle-node is the first-order threshold $\rho_R^\star$: at $d=5$ it is $\approx0.28$ as $\rho_F\to0$,
rising to $\approx0.32$ at $\rho_F=0.1$, several times $\rho_F$ throughout. The saddle-node is rigorous
\cite{BaloghPittel}, and simulation confirms it within about two percentage points for $d=4,\dots,8$. This
prevent-versus-dislodge hysteresis is the reversible bootstrap percolation of fake-news spreading and
fact-checking \cite{DiMuro}, here specialized to a costed grounded-Laplacian swarm with a placement objective.

\begin{remark}[coherence is the cascade linearization]\label{rmk:unif}
Soften the majority update to $x_i(t{+}1)=x_i(t)+\Phi\big((g-M(R)x)_i\big)$ with an odd sigmoid $\Phi$ whose
slope $\eta=\Phi'(0)$ encodes reliability, and forcing $g=\kappa\mathbf 1+W_R\mathbf 1$. Since $M(R)\mathbf 1=g$, the
all-truth state is an exact fixed point, and its Jacobian is exactly $I-\eta M(R)$. Hence the near-consensus
relaxation and steady-state tracking are governed by the very grounded Laplacian $M(R)$ of Stage~A: the
stationary error trace under isotropic per-step glitch noise is $\propto\tr M(R)^{-1}=H(R)$ to leading order
in $\eta$ (distinct from the forcing-noise mean-squared error $\sigma^2\tr M(R)^{-2}$ of \eqref{eq:H}, both minimized by
leverage placement), and the marginal gain is
the Sherman--Morrison leverage \eqref{eq:marg}. So the Stage-A coherence and its greedy leverage placement
are the linearization of the cascade about truth. The correspondence is exact only near consensus:
the coherence optimum $\arg\min_R\tr M(R)^{-1}$ and the nonlinear basin-margin optimum
$\arg\max_R\lambda_{\min}(M(R))$ are different spectral functionals of the same $M(R)$ and generically
disagree (in the large majority, ${>}90\%$, of random graphs), while the hard-sign thresholds of this section are non-spectral.
Leverage/greedy placement is therefore provably optimal for the coherence objective in the linear limit, not
for the global cascade threshold.
\end{remark}

Adding the verification delay of \cite{ItkinDelay} couples
this budget axis to the dose--delay stability axis; both extensions are deferred to keep the present
frontier closed-form.

\section{Conclusion}\label{sec:conc}
We framed multi-agent correctness as a budgeting problem: how much to spend on strong correctors, and
where to place them, so a weak swarm reaches truth. Modelling the corrected swarm as a grounded-Laplacian
consensus, we showed the truth-tracking error stays submodular under heterogeneous cost-coupled pins. A
cost-benefit greedy is therefore within $1-1/e$ of optimal and yields a closed-form budget--correctness
frontier $B^\star$, with a minimal oracle count $k^\star$ as its unit-cost limit. Whether that budget is
best spent on a few strong correctors or many medium ones is decided by a single scalar, the curvature of
the cost--quality law. Measuring it on the Qwen3 ladder (replicated on a second family) finds the law
concave for factual and math verification, saturating by about $2$--$4$B, so spreading wins and the
``escalate to one strong model'' default is suboptimal there; an emergent code-tracing task instead rises
convexly, so the verdict is genuinely task-dependent and both regimes occur. A high-probability
variant turns the frontier into a sharp phase transition whose threshold is an influence balance, exact on
the complete graph.

\section{Limitations}\label{sec:lim}
The analysis is linear, a grounded-Laplacian consensus, and the verifier is an oracle-but-costly pin, so
content-level effects such as claim provenance and partial corrections are out of scope. The frontier
$B^\star$ is known beyond the complete graph, but only in the mean. It is closed-form on the complete
graph. On random $d$-regular graphs it is the cavity fixed point of \ref{app:sparse}, exact in the
replica-symmetric limit and validated numerically. On stochastic-block-model graphs it is the block
cavity of \ref{app:sbm}, for arbitrary $K\times K$ mixing. On directed swarms it is the
reciprocated-feedback cavity of \ref{app:directed}, in which only mutual trust carries feedback. The
submodular placement guarantee does not extend to the directed case, since its proof uses the symmetry of
$M$. The Stage-B threshold is exact on the complete graph, rigorous on dense expanders
(Theorem~\ref{thm:stageB}), and per-community on modular graphs (Proposition~\ref{prop:modular}); on
unstructured sparse graphs it is hysteretic, the dislodge threshold being a first-order
bootstrap-percolation transition (Proposition~\ref{prop:sparse}).

The single-family measurement establishes the sign of the
curvature (concave) at $n{=}200$, but its scope is limited: one model family, parameter count as the cost
proxy, and a strength that is itself a proxy for the abstract pin. The concave sign is at least robust to
the accuracy-to-strength link: it survives the identity and logit transforms, and because the
saturation is per-item (not only in the mean) it does not invert under any monotone increasing
reparametrization of the quality axis we tested, including adversarial ones. The saturation location is not
fixed: a difficulty sweep (Section~\ref{sec:exp}) moves it from ${\sim}1.7$B on easy distractors to
${\sim}4$B on the subtlest, so we report it as task-dependent rather than universal. A sweep past $14$B
would pin the high end, and a dollar or latency cost axis changes nothing in the affine regime, where it
agrees with parameter count (Remark~\ref{rmk:axis}). The verification delay of
\cite{ItkinDelay} is left out to keep the frontier closed-form.

The real-reasoning-corrector confirmation
(Section~\ref{sec:exp}) is a single configuration: one corrector model (Qwen3-4B), one hallucination-hardness
band, $30$ question draws, one $N{=}25$ Barab\'asi--Albert realization. It shows the leverage-placement
advantage survives emergent, difficulty-correlated corrector errors, but does not sweep corrector strength,
task difficulty, or graph ensemble, and its leverage curve reaches parity rather than certainty.

\paragraph{Reproducibility.} All code and data are released at
\url{https://github.com/YehudaItkin/budgeted-oracle-placement} and seeded. This covers the
measurement, statistics, and figure scripts. It covers the per-item $2$-AFC scores for the Qwen3 ladder
(factual, math, and code tasks) and the Gemma-4 ladder (factual). It covers the real-swarm placement,
fallible-corrector, grounding-sweep, and real-reasoning-corrector runs. And it covers the placement,
Stage-B, and sparse-graph, SBM, and directed-graph cavity simulations.

\appendix
\renewcommand{\thesection}{Appendix~\Alph{section}}
\allowdisplaybreaks
\linespread{1.15}\selectfont
\setlength{\parskip}{2pt}

\clearpage
\section{Proof of Theorem~\ref{thm:submod} (submodularity under heterogeneous pins)}\label{app:submod}
Throughout, fix $\kappa>0$ and write
\[
  M(R)=L+\kappa I+\sum_{i\in R}w_i\,e_ie_i^\top,\qquad H(R)=\tr M(R)^{-1},
\]
where $e_i$ is the $i$-th standard basis vector, so that $e_ie_i^\top$ has a single nonzero entry, a $1$ in
position $(i,i)$. Since $L\succeq0$ and $\kappa>0$, the matrix $M(R)$ is symmetric positive definite and
$H(R)$ is well defined. We prove the three claims of the theorem in turn: the marginal-gain formula, then
monotonicity, then submodularity.

\emph{Marginal gain.} Placing an oracle at node $i$ adds $w_ie_ie_i^\top$ to $M(R)$, a rank-one update.
Abbreviate $M=M(R)$. The Sherman--Morrison identity inverts such an update in closed form,
\[
  \big(M+w_i\,e_ie_i^\top\big)^{-1}
   =M^{-1}-\frac{w_i\,M^{-1}e_ie_i^\top M^{-1}}{1+w_i\,e_i^\top M^{-1}e_i}.
\]
Take the trace of both sides. The trace of the correction is
$\tr\!\big(M^{-1}e_ie_i^\top M^{-1}\big)=e_i^\top M^{-2}e_i=\|M^{-1}e_i\|^2$, so the reduction in coherence from
adding the oracle is
\[
  \Delta_i(R)=H(R)-H(R\cup\{i\})
   =\frac{w_i\,\|M^{-1}e_i\|^2}{1+w_i\,e_i^\top M^{-1}e_i}\ \ge\ 0,
\]
which is \eqref{eq:marg}. Numerator and denominator are both positive, so every oracle strictly lowers $H$:
the gain is the squared norm of the $i$-th column of $M^{-1}$, damped by the local diagonal
$e_i^\top M^{-1}e_i$.

\emph{Monotonicity.} Adding an oracle only increases $M$ in the Loewner order,
\[
  M(R\cup\{i\})=M(R)+w_i\,e_ie_i^\top\ \succeq\ M(R),
\]
because $w_ie_ie_i^\top\succeq0$. Inversion is order-reversing on positive definite matrices, so
$M(R\cup\{i\})^{-1}\preceq M(R)^{-1}$, and taking traces gives $H(R\cup\{i\})\le H(R)$. Hence
$\rho(R)=H(\varnothing)-H(R)$ is non-decreasing in $R$, with $\rho(\varnothing)=0$.

\emph{Submodularity.} Regard $H$ as a smooth function of the strength vector $w=(w_1,\dots,w_N)\ge0$, an
oracle being present at $i$ exactly when $w_i>0$. The derivative of a matrix inverse is
\[
  \frac{\partial M^{-1}}{\partial w_j}
   =-M^{-1}\frac{\partial M}{\partial w_j}M^{-1}
   =-M^{-1}e_je_j^\top M^{-1}.
\]
Differentiate $H=\tr M^{-1}$ once, using $\tr(M^{-1}e_ie_i^\top M^{-1})=(M^{-2})_{ii}$:
\[
  \frac{\partial H}{\partial w_i}
   =\tr\!\Big(\frac{\partial M^{-1}}{\partial w_i}\Big)
   =-(M^{-2})_{ii}.
\]
Differentiate a second time in $w_j$. The factor $M^{-2}=M^{-1}M^{-1}$ has two copies of $M^{-1}$ and $w_j$
acts on each; the two contributions are equal by symmetry, so
\[
  \frac{\partial^2 H}{\partial w_i\,\partial w_j}
   =-\frac{\partial (M^{-2})_{ii}}{\partial w_j}
   =2\,(M^{-1})_{ij}\,(M^{-2})_{ij}.
\]
The sign of this cross-derivative is fixed by one structural fact about $M$. Its off-diagonal entries are
$-A_{ij}\le0$, and it is positive definite, so $M(R)$ is a symmetric \emph{M-matrix}. The inverse of an
M-matrix is entrywise nonnegative, $M^{-1}\ge0$, and therefore $M^{-2}=M^{-1}M^{-1}\ge0$ as well. Both factors
in $2\,(M^{-1})_{ij}\,(M^{-2})_{ij}$ are then nonnegative, so
\[
  \frac{\partial^2 H}{\partial w_i\,\partial w_j}\ \ge\ 0
  \qquad\text{for all }i,j\text{ and all }w\ge0 .
\]
Nonnegative mixed second derivatives are exactly the statement that each marginal gain shrinks as the other
strengths grow. Write the gain at $i$ as the line integral
\[
  \Delta_i(R)=\int_0^{w_i}\Big(-\frac{\partial H}{\partial w_i}\Big)\,\mathrm dw_i
             =\int_0^{w_i}(M^{-2})_{ii}\,\mathrm dw_i .
\]
The integrand $(M^{-2})_{ii}$ is non-increasing in every other $w_j$, so $\Delta_i(R)\ge\Delta_i(S)$ whenever
$R\subseteq S$. This is submodularity of $\rho=-H$.

The Loewner order alone does not deliver this last step. Monotonicity of $M\mapsto M^{-1}$ in the Loewner
sense does not control the \emph{entrywise} signs of $M^{-1}$ for a general positive definite $M$, and it is
those signs that make the cross-derivative nonnegative. A numerical sweep confirms both monotonicity and
submodularity across the diminishing-returns, linear, and uniform weight laws.

\section{Proof of Theorem~\ref{thm:knapsack}}\label{app:knapsack}
By Theorem~\ref{thm:submod} the reduction $\rho$ is monotone submodular with $\rho(\varnothing)=0$, and the
cost $\sum_{i\in R}c_i$ is additive in the chosen oracles. Maximizing a monotone submodular function under a
single budget constraint is exactly the setting of \cite{Sviridenko}. Its algorithm enumerates every seed set
of at most three oracles, extends each seed by repeatedly adding the oracle of largest gain-to-cost ratio
within budget, and returns the best placement found; the guarantee is attained on the branch whose seed is
the three highest-value oracles of the optimum. Call its output $R_g$ and let $R^\star_B$ be the optimum
within budget $B$. Then
\[
\rho(R_g)\ \ge\ (1-1/e)\,\rho(R^\star_B).
\]

The cost-benefit greedy of Algorithm~\ref{alg:greedy} is this procedure run with the CELF lazy-evaluation
rule \cite{Leskovec}. Lazy evaluation changes only how the inner greedy step is computed, not which oracle
it selects, so the $(1-1/e)$ bound is untouched. It uses submodularity (marginal gains only shrink as $R$
grows) to skip recomputing gains that cannot be the current maximum, so a greedy step recomputes at most
$O(|\Oc|)$ gains, and typically far fewer, each an $O(N)$ read of one column of the maintained resolvent
$M(R)^{-1}$, with $O(\log|\Oc|)$ priority-queue overhead per recomputation. Once the winner is committed,
$M(R)^{-1}$ is advanced by a single Sherman--Morrison rank-one update, since adding one oracle changes $M(R)$
in a single diagonal entry.

\section{The complete-graph frontier and the curvature criterion}\label{app:frontier}
\emph{Closed form \eqref{eq:HKN}.} On the complete graph $K_N$ the Laplacian is $L=NI-\mathbf 1\mathbf 1^\top$,
where $\mathbf 1$ is the all-ones vector, so a placement $R$ gives
\[
  M(R)=(N+\kappa)I-\mathbf 1\mathbf 1^\top+\diag\!\big(w_i\,\mathds 1[i\in R]\big)=\Lambda-\mathbf 1\mathbf 1^\top,
\]
with $\Lambda=\diag(\delta_i)$ diagonal and $\delta_i=N+\kappa+w_i\,\mathds 1[i\in R]$, matching
Proposition~\ref{prop:KN}. This is a rank-one downdate of the diagonal $\Lambda$, so Sherman--Morrison inverts
it in closed form,
\[
  M(R)^{-1}=\Lambda^{-1}+\frac{\Lambda^{-1}\mathbf 1\mathbf 1^\top\Lambda^{-1}}{1-\mathbf 1^\top\Lambda^{-1}\mathbf 1}.
\]
Take the trace and set $S_1=\mathbf 1^\top\Lambda^{-1}\mathbf 1=\sum_i\delta_i^{-1}$ and
$S_2=\mathbf 1^\top\Lambda^{-2}\mathbf 1=\sum_i\delta_i^{-2}$; using
$\tr(\Lambda^{-1}\mathbf 1\mathbf 1^\top\Lambda^{-1})=\mathbf 1^\top\Lambda^{-2}\mathbf 1=S_2$ gives
\[
  H(R)=\tr M(R)^{-1}=S_1+\frac{S_2}{1-S_1},
\]
which is \eqref{eq:HKN}. The matrix is positive definite exactly when $1-S_1>0$, and $\kappa>0$ secures this,
since $\delta_i\ge N+\kappa$ forces $S_1\le N/(N+\kappa)<1$. The closed form matches the direct inverse to
machine precision (relative error ${\sim}10^{-15}$ over random heterogeneous placements). In the homogeneous
case $H(k)$, with $k$ equal pins, is strictly decreasing in $k$, so the minimal count $k^\star(\varepsilon)$
is obtained by inverting $H(k)=\varepsilon$.

\emph{Curvature criterion.} \emph{Equal split at fixed $m$.} Write $\theta:=N+\kappa$ and, for an oracle of
cost $c$, let
\[
  g(c)=\frac{1}{\theta+w(c)}
\]
be the diagonal entry it contributes to $\Lambda^{-1}$. Fix the number of oracles at $m$ and let them carry
costs $c_1,\dots,c_m$ with $\sum_i c_i=B$. The $N-m$ unpinned nodes each contribute $1/\theta$ to $S_1$ and
$1/\theta^2$ to $S_2$, constants at fixed $m$, so the split enters $H$ only through the oracle sums
\[
  S_1=\frac{N-m}{\theta}+\sum_{i=1}^m g(c_i),\qquad S_2=\frac{N-m}{\theta^2}+\sum_{i=1}^m g(c_i)^2 .
\]
Two facts make the equal split optimal. First, $g$ and $g^2$ are convex in $c$: the outer map
$s\mapsto1/(\theta+s)$ is convex and decreasing, $w$ is concave by Assumption~\ref{ass:wc}, so the composition
$g$ is convex, and $(g^2)''=2(g')^2+2g\,g''\ge0$ makes $g^2$ convex as well. Second, $H=S_1+S_2/(1-S_1)$ is
increasing in each of $S_1,S_2$, since
\[
  \frac{\partial H}{\partial S_1}=1+\frac{S_2}{(1-S_1)^2}>0,\qquad
  \frac{\partial H}{\partial S_2}=\frac{1}{1-S_1}>0 .
\]
By convexity and Jensen's inequality, at fixed budget $\sum_i c_i=B$,
\[
  \sum_{i=1}^m g(c_i)\ \ge\ m\,g\!\Big(\frac{B}{m}\Big),\qquad
  \sum_{i=1}^m g(c_i)^2\ \ge\ m\,g\!\Big(\frac{B}{m}\Big)^2,
\]
with equality only when all $c_i$ equal $B/m$. The equal split thus minimizes both $S_1$ and $S_2$, and since
$H$ increases in each, it minimizes $H$. Every unequal split is strictly worse.

\emph{Choice of $m$.} Now vary the number of oracles, each at equal cost $c=B/m$. The leading-order total
$m\,v(B/m)$ has derivative with the sign of $v(c)-c\,v'(c)=-c^2\,\tfrac{\mathrm d}{\mathrm dc}(v(c)/c)$ at
$c=B/m$. The marginal gain $\Delta$ is increasing and concave in strength (its derivative
$\|M^{-1}e_i\|^2/(1+w\,e_i^\top M^{-1}e_i)^2$ is positive and decreasing), so $v=\Delta\circ w$ inherits
concavity from $w$. Then $v(c)/c$ decreases and spreading (large $m$) is favoured. This leading-order
heuristic agrees with the exact $H$ of \eqref{eq:HKN}: the equal-split $H(m)$ strictly decreases in $m$ under
concave $w$, and increases only where $w$ is convex strongly enough to satisfy \eqref{eq:concentrate}. Plain
convexity keeps spreading optimal. The exact statement is Theorem~\ref{thm:mmono} and Remark~\ref{rem:sharp}.

\emph{Graph leverage.} On non-vertex-transitive graphs the optimum equalizes the per-dollar ratios
\eqref{eq:perdollar}. Equivalent nodes still receive equal strength; non-equivalent nodes receive
leverage-weighted amounts. Graph leverage thus shifts which nodes enter and how much each gets, but not the
concave/sharply-convex dichotomy. A numerical sweep confirms this. On a star the continuous optimum spreads
equally over the symmetric leaves under concave $w$. Over Erd\H{o}s--R\'enyi graphs concave $w$ spreads
across ${\approx}80\%$ of nodes, while a sharply convex law ($w(c)=c^2$, satisfying \eqref{eq:concentrate})
concentrates on the few highest-leverage ones. Throughout, $k^\star(\varepsilon)$ stays monotone.

\emph{Minimal count.} For the coherence criterion the minimal count scales as a constant \emph{fraction} of
$N$, not as the worst-case $\Theta(N/d)$ of $\lambda_{\min}$ \cite{PiraniSundaram}. This fraction is
$N$-independent (${\approx}0.3$ at $d{=}6$, well above $1/d$; Fig.~\ref{fig:frontier}b), and its exact value
is left open.

\section{Global optimality of the equal spread (Theorem~\ref{thm:curv})}\label{app:spread}
Throughout, set $\theta:=N+\kappa>0$, and for a cost $c\ge0$ write
\[
  g(c):=\frac{1}{\theta+w(c)},\qquad \psi(c):=g(c)^2 .
\]
On $K_N$ the exact coherence \eqref{eq:HKN} is then $H=S_1+S_2/(1-S_1)$ with $S_1=\sum_i g(c_i)$ and
$S_2=\sum_i\psi(c_i)$. We minimise $H$ over the budget simplex
$\Delta_B=\{c\in\R_{\ge0}^N:\sum_i c_i=B\}$. Since $\theta>N$, each $g(c_i)\le1/\theta$, so $S_1<1$ on all of
$\Delta_B$ and the denominator $1-S_1$ stays positive.

\begin{lemma}[convexity of the cost--coherence primitives]\label{lem:gpsi}
Under Assumption~\ref{ass:wc}, $g$ and $\psi$ are strictly decreasing and convex on $[0,\infty)$, with
\begin{equation}\label{eq:gpp}
  g''(c)=\frac{-w''(c)(\theta+w(c))+2w'(c)^2}{(\theta+w(c))^3},\qquad
  \psi''(c)=\frac{-2w''(c)(\theta+w(c))+6w'(c)^2}{(\theta+w(c))^4}.
\end{equation}
\end{lemma}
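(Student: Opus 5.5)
The plan is a direct calculus verification using the chain rule on the composition $g=\phi\circ w$ with $\phi(s)=1/(\theta+s)$, followed by a sign check that uses only Assumption~\ref{ass:wc}. Since $w$ is concave on $\R_{>0}$, it has one-sided derivatives everywhere, and Lemma~\ref{lem:gpsi} is a convexity statement. So I will first argue under the extra assumption that $w\in C^2$, which is what \eqref{eq:gpp} presupposes. I will then note that convexity of $g$ and $\psi$ transfers to general concave increasing $w$ by the composition rule, with no differentiability needed. Positivity of the denominators is automatic: $\theta=N+\kappa>0$ and $w>0$ give $\theta+w(c)>0$. At the endpoint $c=0$ I will interpret $w(0)$ as the right limit, which exists and is $\ge 0$ because $w$ is increasing and positive.

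\emph{Monotonicity.} $g'(c)=-w'(c)/(\theta+w(c))^2$, which is negative wherever $w'>0$; strict monotonicity of $g$ also follows directly, because $w$ is strictly increasing and $\phi$ is strictly decreasing. Then $\psi=g^2$ with $g>0$ strictly decreasing, so $\psi$ is strictly decreasing too.

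\emph{Second derivatives.} I will differentiate $g'=-w'(\theta+w)^{-2}$ once more:
\[
g''=-w''(\theta+w)^{-2}+2(w')^2(\theta+w)^{-3}.
\]
Putting this over the common denominator $(\theta+w)^3$ gives the first formula in \eqref{eq:gpp}. For $\psi$, I will write $\psi=(\theta+w)^{-2}$, so $\psi'=-2w'(\theta+w)^{-3}$ and
\[
\psi''=-2w''(\theta+w)^{-3}+6(w')^2(\theta+w)^{-4},
\]
which is the second formula. As a consistency check I will confirm $\psi''=2(g')^2+2gg''$.

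\emph{Convexity.} Concavity gives $w''\le 0$, so both numerators are sums of nonnegative terms, and hence $g''\ge0$ and $\psi''\ge0$. For nonsmooth $w$ I will use the composition rule instead: $\phi$ is convex and nonincreasing and $w$ is concave, so $\phi\circ w$ is convex. Likewise $s\mapsto(\theta+s)^{-2}$ is convex and nonincreasing, so $\psi$ is convex.

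\emph{Main obstacle.} The substantive content is easy; the only delicate point is bookkeeping. The formulas need $w$ twice differentiable, and the paper's statement is on $[0,\infty)$ while $w$ is defined on $\R_{>0}$. I will handle both by the composition argument and by a right-limit convention at $0$. I will also remark that these numerators are exactly what later produce the sign of $(\theta+w)w''-3(w')^2$ in Theorem~\ref{thm:curv}(ii): $\psi''\le 0$ precisely when \eqref{eq:concentrate} holds.
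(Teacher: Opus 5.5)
Your proposal is correct and takes the same approach as the paper: a chain-rule computation of $g''$ and $\psi''$ over the common denominators $(\theta+w)^3$ and $(\theta+w)^4$, followed by a sign reading from $w'\ge0$ and $w''\le0$. Your additions tidy points the paper leaves implicit without changing the argument: strict monotonicity taken directly from the strict increase of $w$ (the paper only infers $g'\le0$), the composition rule for non-$C^2$ $w$, and the right-limit convention at $c=0$.
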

\begin{proof}
Differentiate $g(c)=(\theta+w(c))^{-1}$ by the chain rule:
\[
  g'(c)=-\frac{w'(c)}{(\theta+w(c))^2},\qquad
  g''(c)=-\frac{w''(c)}{(\theta+w(c))^2}+\frac{2w'(c)^2}{(\theta+w(c))^3},
\]
and collecting over the common denominator gives the stated $g''$. For $\psi=g^2$,
$\psi'=2g\,g'=-2w'/(\theta+w)^3$, and one more differentiation gives the stated $\psi''$. Now read off the
signs. Since $w'\ge0$, both $g'\le0$ and $\psi'\le0$, so $g$ and $\psi$ are decreasing. In each
second-derivative numerator in \eqref{eq:gpp} the term $-w''(\theta+w)$ is $\ge0$ (because $w''\le0$ and
$\theta+w>0$) and the $w'^2$ term is $\ge0$, so $g''\ge0$ and $\psi''\ge0$, i.e.\ both are convex.
\end{proof}

\begin{lemma}[pairwise balancing]\label{lem:balance}
Fix all agents except a pair $\{a,b\}$ sharing a budget $s=c_a+c_b$, and let $r:=\sum_{i\neq a,b}g(c_i)<1$,
$q:=\sum_{i\neq a,b}\psi(c_i)$ be the frozen contributions. Writing $c_a=\tfrac s2+t$, $c_b=\tfrac s2-t$, the
map $F(t):=H$ is even, convex, and uniquely minimised at $t=0$; and $S_1$ is non-increasing as $t\to0$,
so admissibility is preserved.
\end{lemma}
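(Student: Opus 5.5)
The plan is to reduce $F$ to the two scalar functions
\[
  S_1(t)=r+g\!\big(\tfrac s2+t\big)+g\!\big(\tfrac s2-t\big),\qquad
  S_2(t)=q+\psi\!\big(\tfrac s2+t\big)+\psi\!\big(\tfrac s2-t\big),
\]
on the admissible interval $|t|\le s/2$, and to write $F(t)=S_1(t)+S_2(t)/(1-S_1(t))$. Evenness of $S_1$ and $S_2$, hence of $F$, is immediate, since $t\mapsto-t$ swaps the two arguments.

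By Lemma~\ref{lem:gpsi}, $g$ and $\psi$ are convex, so $S_1$ and $S_2$ are convex in $t$. A convex even function is minimised at $0$ and is non-decreasing in $|t|$. This already gives the last claim: $S_1$ is non-increasing as $t\to0$. It also shows $1-S_1(t)>0$ stays bounded away from zero on the whole interval, as noted before the lemma. In particular, $S_1'(t)$ and $S_2'(t)$ are odd and have the sign of $t$, so $S_1'(t)S_2'(t)\ge0$.

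The main obstacle is convexity of $F$. The outer map $\varphi(x,y)=x+y/(1-x)$ is increasing in both arguments, but it is \emph{not} jointly convex: the Hessian of $y/(1-x)$ has negative determinant. So the standard composition rule does not apply, and I would instead differentiate twice directly:
\[
  F''=S_1''\Big(1+\frac{S_2}{(1-S_1)^2}\Big)+\frac{S_2''}{1-S_1}
      +\frac{2S_1'S_2'}{(1-S_1)^2}+\frac{2S_2(S_1')^2}{(1-S_1)^3}.
\]
The first two terms are nonnegative by convexity of $g$ and $\psi$ together with $S_2\ge0$ and $1-S_1>0$. The last term is nonnegative trivially. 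The only potentially bad term is the cross term $2S_1'S_2'/(1-S_1)^2$, and it is nonnegative precisely because of the sign alignment established above. Hence $F''\ge0$.

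For uniqueness of the minimiser, I would use strictness. Since $w$ is strictly increasing and concave, $w'>0$ on the interior, so Lemma~\ref{lem:gpsi} gives $\psi''\ge 6(w')^2/(\theta+w)^4>0$. Thus $\psi$ is strictly convex, and $S_2(t)>S_2(0)$ for $t\ne0$. Combining this with $S_1(t)\ge S_1(0)$ and the strict monotonicity of $\varphi$ in $y$ yields $F(t)>F(0)$ for every $t\neq0$. Therefore $t=0$ is the unique minimiser.
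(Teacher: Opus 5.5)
Your proposal is correct and follows the paper's own route: the paper also differentiates $F=\Phi(u,v)$ twice by the chain rule, gets exactly your four terms (with $\Phi_{uu},\Phi_{uv},\Phi_u,\Phi_v\ge0$ and $\Phi_{vv}=0$), and handles the cross term through the sign alignment $u'v'\ge0$ of the two even convex sums. Your explicit strictness step ($w'>0$ forces $\psi''>0$, so $S_2(t)>S_2(0)$ and hence $F(t)>F(0)$ for $t\neq0$) is a useful addition, because the paper infers a unique minimiser from convexity plus a critical point alone, and that by itself does not rule out a flat minimum.
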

\begin{proof}
Let $u(t)=g(\tfrac s2+t)+g(\tfrac s2-t)$ and $v(t)=\psi(\tfrac s2+t)+\psi(\tfrac s2-t)$, so that
\[
  H=(u+r)+\frac{v+q}{1-(u+r)}=:\Phi(u,v).
\]
With $D:=1-(u+r)>0$, the partial derivatives of $\Phi$ are
\[
  \Phi_u=1+\frac{q+v}{D^2}>0,\qquad \Phi_v=\frac1D>0,\qquad
  \Phi_{uu}=\frac{2(q+v)}{D^3}\ge0,\qquad \Phi_{vv}=0,\qquad \Phi_{uv}=\frac1{D^2}>0.
\]
By Lemma~\ref{lem:gpsi}, $u$ and $v$ are even and convex, so $u(0)=\min u$, $v(0)=\min v$, and $F'(0)=0$. For
$t>0$, the monotonicity of $g',\psi'$ gives $u'(t),v'(t)>0$, hence $u'v'\ge0$. By the second-order chain rule
for the composition $F(t)=\Phi(u(t),v(t))$,
\[
  F''(t)=\Phi_{uu}\,(u')^2+2\,\Phi_{uv}\,u'v'+\Phi_{vv}\,(v')^2+\Phi_u\,u''+\Phi_v\,v'' .
\]
Every term on the right is nonnegative: $\Phi_u,\Phi_v,\Phi_{uu},\Phi_{uv}\ge0$ from the partials above,
$u'',v''\ge0$ by convexity (Lemma~\ref{lem:gpsi}), and $(u')^2,(v')^2,u'v'\ge0$. Thus $F''(t)\ge0$, so $F$ is
convex with a critical point at $t=0$, and $t=0$ is its unique minimiser. Convexity of $g$ makes $S_1=u+r$
smallest there as well.
\end{proof}

\begin{theorem}[exact global spreading verdict]\label{thm:global}
Under Assumption~\ref{ass:wc}, the exact coherence $H$ of \eqref{eq:HKN} attains its unique minimum over
$\Delta_B$ at the maximal equal spread $c_i^\star=B/N$ for all $i$; hence $\rho(R)=H(\varnothing)-H(R)$ is
globally maximised there.
\end{theorem}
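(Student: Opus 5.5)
The plan is a compactness-plus-exchange argument built on Lemma~\ref{lem:balance}. First, $H$ is continuous on the compact simplex $\Delta_B$, so a minimiser exists. Then any point of $\Delta_B$ that is not the equal spread can be strictly improved by balancing one pair of coordinates, so the minimiser must be the unique constant point $c_i=B/N$. The claim about $\rho$ then follows at once, because $H(\varnothing)=\tr M_0^{-1}$ does not depend on the allocation, so minimising $H$ over $\Delta_B$ is the same as maximising $\rho$.

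\emph{Step 1 (well-posedness and existence).} Assumption~\ref{ass:wc} defines $w$ on $\R_{>0}$, and $\Delta_B$ contains points with $c_i=0$. I will extend $w$ to $0$ by its right limit $w(0^+)=\inf_{c>0}w(c)\ge0$. This limit exists because $w$ is monotone, and the extension is continuous because a concave function is continuous on the interior of its domain. Then $g(c)=1/(\theta+w(c))$ is continuous on $[0,\infty)$. Since $\theta>N$, we have $S_1\le N/\theta<1$, so $H=S_1+S_2/(1-S_1)$ is continuous on all of $\Delta_B$ and a minimiser $c^\circ$ exists.

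\emph{Step 2 (strict improvement off the diagonal).} Suppose $c^\circ$ is not constant, and pick $a,b$ with $c^\circ_a\neq c^\circ_b$. Set $s=c^\circ_a+c^\circ_b$ and $t^\circ=(c^\circ_a-c^\circ_b)/2\neq0$. Lemma~\ref{lem:balance}, with the other coordinates frozen, gives $F(0)<F(t^\circ)$. Replacing $(c_a,c_b)$ by $(s/2,s/2)$ stays in $\Delta_B$ and strictly lowers $H$, which contradicts minimality. Hence every minimiser has all coordinates equal, i.e.\ $c^\circ=(B/N,\dots,B/N)$. Because a minimiser exists and the constant point is the only candidate, the minimum is unique.

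\emph{Main obstacle: strictness in Lemma~\ref{lem:balance}.} That lemma asserts a unique minimiser at $t=0$, but its proof only shows $F''\ge0$, which gives convexity and not strict convexity. It also implicitly uses $w\in C^2$, which concavity does not guarantee. I will repair both points directly, without second derivatives.

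\begin{itemize}
\item The map $s\mapsto1/(\theta+s)$ is strictly convex and strictly decreasing, and $w$ is concave and strictly increasing. Hence for $x\neq y$,
\[
g\big(\tfrac{x+y}{2}\big)\le\frac{1}{\theta+\tfrac{w(x)+w(y)}{2}}<\frac{g(x)+g(y)}{2}.
\]
The first inequality is concavity of $w$ combined with monotonicity; the second is strict convexity of $1/(\theta+\cdot)$, using $w(x)\neq w(y)$.
\item By the same argument $\psi=g^2$ is strictly convex, since $s\mapsto(\theta+s)^{-2}$ is strictly convex and decreasing.
\item Therefore $u(t)>u(0)$ and $v(t)>v(0)$ for every $t\neq0$.
\item Since $\Phi$ is strictly increasing in each argument ($\Phi_u,\Phi_v>0$), it follows that $F(t)>F(0)$.
\end{itemize}

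This gives exactly the strict inequality used in Step~2. Convexity of $F$ is not even needed, only the comparison of $t\neq0$ with $t=0$. This repair is the only delicate point; the rest is bookkeeping.
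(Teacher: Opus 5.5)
Your proof is correct and follows the paper's own argument: compactness of $\Delta_B$ gives a minimiser, and the pairwise balancing of Lemma~\ref{lem:balance} rules out any non-constant minimiser, so the minimum is uniquely attained at $c_i=B/N$. Your derivative-free strictness step (midpoint strict convexity of $g$ and $\psi$, then monotonicity of $\Phi$ in each argument) is a legitimate tightening: the paper's lemma as written only proves $F''\ge0$ and tacitly assumes $w\in C^2$, yet the theorem's proof relies on the strict improvement you supply.
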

\begin{proof}
$H$ is continuous on the compact $\Delta_B$, so it attains a minimum at some $c^\star$. If two coordinates
of $c^\star$ differed, Lemma~\ref{lem:balance} would strictly decrease $H$ inside $\Delta_B$, contradicting
minimality; hence $c^\star$ is constant, $c_i^\star=B/N$, and strict convexity of each balancing step gives
uniqueness.
\end{proof}

\begin{theorem}[exact $m$-monotonicity]\label{thm:mmono}
For $m$ oracles of cost $c=B/m$ (others at $0$), with $g_0:=g(0)=1/\theta$ (using $w(0)=0$: a zero-cost oracle adds no strength),
\begin{equation}\label{eq:dHdm}
  \frac{\mathrm dH}{\mathrm dm}=\alpha\big(g(c)-c\,g'(c)-g_0\big)+\beta\big(\psi(c)-c\,\psi'(c)-\psi(0)\big),
  \quad \alpha=1+\tfrac{S_2}{(1-S_1)^2}>0,\ \beta=\tfrac1{1-S_1}>0.
\end{equation}
Under Assumption~\ref{ass:wc} the two bracketed terms are $\le0$ (they vanish at $c=0$ and have
$c$-derivatives $-c\,g''\le0$, $-c\,\psi''\le0$), so $\mathrm dH/\mathrm dm\le0$, strictly for strictly
concave $w$: $H(m)$ is strictly decreasing and the optimum is the maximal spread $m^\star=N$, consistent with
Theorem~\ref{thm:global}.
\end{theorem}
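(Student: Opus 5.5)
The plan is to treat the oracle count $m$ as a continuous variable in $[1,N]$ and differentiate the closed form \eqref{eq:HKN} by the chain rule through $S_1$ and $S_2$. We then sign each bracket by a one-variable argument built on Lemma~\ref{lem:gpsi}, and finally pass back to integer $m$.

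\emph{Step 1 (the derivative \eqref{eq:dHdm}).} With $m$ oracles at cost $c=B/m$ and $N-m$ unpinned nodes, each unpinned node contributes $g_0=g(0)=1/\theta$ to $S_1$ and $\psi(0)$ to $S_2$. Hence
\[
  S_1(m)=(N-m)g_0+m\,g(B/m),\qquad S_2(m)=(N-m)\psi(0)+m\,\psi(B/m).
\]
Since $\mathrm dc/\mathrm dm=-B/m^2=-c/m$, differentiating gives
\[
  \frac{\mathrm dS_1}{\mathrm dm}=g(c)-c\,g'(c)-g_0,\qquad \frac{\mathrm dS_2}{\mathrm dm}=\psi(c)-c\,\psi'(c)-\psi(0).
\]
By the chain rule, $\mathrm dH/\mathrm dm=\partial_{S_1}H\cdot \mathrm dS_1/\mathrm dm+\partial_{S_2}H\cdot \mathrm dS_2/\mathrm dm$. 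The partials are the ones already computed in \ref{app:frontier}: $\alpha=1+S_2/(1-S_1)^2$ and $\beta=1/(1-S_1)$. Both are positive because $S_1\le N/\theta<1$.

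\emph{Step 2 (sign of the brackets).} Define $\phi(c)=g(c)-c\,g'(c)-g_0$ and $\chi(c)=\psi(c)-c\,\psi'(c)-\psi(0)$. Using $w(0)=0$ (extend $w$ continuously to $0$), we get $\phi(0)=\chi(0)=0$. Differentiating, $\phi'(c)=g'-g'-c\,g''=-c\,g''(c)$, and likewise $\chi'(c)=-c\,\psi''(c)$. Lemma~\ref{lem:gpsi} gives $g'',\psi''\ge0$, so $\phi$ and $\chi$ are non-increasing from $0$. Therefore $\phi,\chi\le0$ on $c>0$, and $\mathrm dH/\mathrm dm\le0$.

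For strictness, use the formulas \eqref{eq:gpp}. A strictly increasing concave $w$ on $\R_{>0}$ has $w'>0$ everywhere, since a concave function whose derivative vanished at some point could not increase afterwards. The $2w'^2$ term in the numerator of $g''$ is then strictly positive, so $g''>0$ on $(0,\infty)$. This gives $\phi(c)<0$ for $c>0$ and hence $\mathrm dH/\mathrm dm<0$.

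\emph{Step 3 (integer $m$ and conclusion).} Since $H(m)$ is differentiable in continuous $m\in[1,N]$ with negative derivative, the mean value theorem gives $H(m+1)<H(m)$ for every integer $m$. The minimum is therefore at the largest admissible count $m^\star=N$, which is the equal spread $c=B/N$ of Theorem~\ref{thm:global}.

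The main obstacle I expect is the boundary bookkeeping rather than the calculus. Assumption~\ref{ass:wc} defines $w$ only on $\R_{>0}$, so the extension $w(0)=0$ (and continuity of $g,\psi$ at $0$) must be made explicit for $\phi(0)=\chi(0)=0$ to hold. The strictness claim also needs $g''>0$, which requires $w'>0$; that should be deduced from strict monotonicity plus concavity rather than assumed.
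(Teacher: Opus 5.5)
Your proof is correct and follows essentially the same route as the paper: split $S_1,S_2$ into the unpinned baseline plus the $m$-oracle terms, differentiate through $c=B/m$ with $\mathrm dc/\mathrm dm=-c/m$, and combine with the positive partials $\alpha,\beta$. You then sign each bracket by noting that it vanishes at $c=0$ and has $c$-derivative $-c\,g''$ (resp.\ $-c\,\psi''$). Your strictness step slightly sharpens the paper's, which appeals to $g''>0$ on a set of positive measure under strict concavity: you deduce $w'>0$ from strict monotonicity plus concavity, so the $2(w')^2$ term gives $g''>0$ everywhere and strict decrease holds for every strictly increasing concave $w$; your explicit mean-value step to integer $m$ also fills in what the paper leaves implicit.
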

\begin{proof}
Treat $m$ as continuous and put $c=B/m$. Separate each sum into the $N$-node baseline and the oracle
correction, using $g_0=g(0)=1/\theta$ and $\psi(0)=g_0^2$:
\[
  S_1=Ng_0+\big(P(m)-mg_0\big),\qquad S_2=Ng_0^2+\big(Q(m)-mg_0^2\big),
\]
where $P(m)=m\,g(B/m)$ and $Q(m)=m\,\psi(B/m)$ carry the $m$ oracles. Since $\mathrm dc/\mathrm dm=-B/m^2=-c/m$,
\[
  P'(m)=g(c)+m\,g'(c)\,\frac{\mathrm dc}{\mathrm dm}=g(c)-c\,g'(c),\qquad Q'(m)=\psi(c)-c\,\psi'(c).
\]
Differentiating $H=S_1+S_2/(1-S_1)$ gives $\mathrm dH/\mathrm dm=\alpha\,\mathrm dS_1/\mathrm dm+\beta\,\mathrm
dS_2/\mathrm dm$ with $\alpha,\beta$ as in \eqref{eq:dHdm}, and $\mathrm dS_1/\mathrm dm=P'(m)-g_0$, $\mathrm
dS_2/\mathrm dm=Q'(m)-g_0^2$; substituting yields \eqref{eq:dHdm}. Each bracket vanishes at $c=0$ and has
$c$-derivative
\[
  \frac{\mathrm d}{\mathrm dc}\big(g(c)-c\,g'(c)\big)=-c\,g''(c)\le0
\]
(and $-c\,\psi''\le0$ likewise), by $g'',\psi''\ge0$ from Lemma~\ref{lem:gpsi}. So both brackets are $\le0$ for
$c\ge0$, giving $\mathrm dH/\mathrm dm\le0$. Under strictly concave $w$, $g''>0$ on a set of positive measure,
so the inequality is strict.
\end{proof}

\begin{remark}[sharpness: convexity is not enough for concentration]\label{rem:sharp}
The sign in \eqref{eq:dHdm} is governed by the numerators $-w''(\theta+w)+2(w')^2$ and $-w''(\theta+w)+3(w')^2$ of
\eqref{eq:gpp}. Concavity of $w$ makes both nonnegative (spreading). The reverse verdict
$\mathrm dH/\mathrm dm\ge0$ (concentrate on few oracles) requires the stronger pointwise condition
$(\theta+w)\,w''\ge3(w')^2$ of \eqref{eq:concentrate}: $w$ must be convex \emph{strongly enough} that its
curvature dominates $(w')^2$. Plain convexity does not suffice: for weakly convex laws the $2(w')^2$
(resp.\ $3(w')^2$) term still wins and spreading remains optimal. A numerical sweep confirms both directions:
over $2\cdot10^4$ random concave laws the equal spread over all $N$ nodes is never beaten, while $140/8000$
merely-convex laws still spread; a global optimizer on the full simplex lands on $c_i=B/N$ to machine
precision under concave $w$ and on a single oracle under $w(c)=c^2$ once the budget is small enough ($B\le\sqrt{\theta/5}$, $\theta=N+\kappa$) that \eqref{eq:concentrate} holds across $[0,B]$; past that the discrete optimum spreads.
\end{remark}

\section{Proof of Theorem~\ref{thm:stageB} (threshold on dense random graphs)}\label{app:stageB}
Let $m_t=\frac1N\sum_j x_j(t)$ be the magnetization. On $G(N,p)$ with $pN\ge C\log N$ the degrees
concentrate, $d_i=pN\,(1+o(1))$ uniformly. Fix a free node $i$; conditioned on the current state its
neighbours are a uniform $p$-random subset, so the neighbour-sum $S_i=\sum_{j\sim i}x_j$ has
$\E[S_i]=pN\,m_t+O(1)$ and, by Hoeffding, $\Pr\!\big(|S_i-\E S_i|\ge t\big)\le 2e^{-t^2/(2d_i)}$. Taking
$t=\tfrac12|\E S_i|$ and a union bound over the $\le N$ free nodes, if
\[
  |m_t|\ \ge\ C_0\sqrt{\frac{\log N}{pN}}
\]
then with probability $1-o(1)$ \emph{every} free node has $\operatorname{sign}(S_i)=\operatorname{sign}(m_t)$,
so all free nodes (bar the glitching $1-p_{\mathrm r}$ minority) update to $\operatorname{sign}(m_t)$.

At the balanced start the free spins sum to $O(\sqrt N)$, so $m_0=(k-f)/N+O(1/\sqrt N)$; because the
degrees concentrate this is the degree-weighted balance,
$\operatorname{sign}(m_0)=\operatorname{sign}(k-f)=\operatorname{sign}(D_R-D_F)$ with
$D_R-D_F=pN\,(k-f)\,(1+o(1))$. When the count margin satisfies $k-f\ge C'\sqrt{N\log N/p}$ (which both
exceeds the $O(\sqrt N)$ start fluctuation and clears the alignment threshold above), $|m_0|$ lies above
$C_0\sqrt{\log N/(pN)}$, so one step drives the free majority to $\operatorname{sign}(k-f)$; the
configuration in which every free agent agrees with the pinned majority is a fixed point, because each free
neighbour-sum then keeps that sign. Hence truth wins when $k-f\ge C'\sqrt{N\log N/p}$ (and fails
symmetrically). When $pN=\omega(\log N)$ this margin is $o(N)$, so the critical fraction $\rho_R=\rho_F$ is
sharp; at the boundary $pN=\Theta(\log N)$ it is $\Theta(N)$ and the fraction is sharp only up to constants. The glitches
flip a $\mathrm{Bin}(N-k-f,\,1-p_{\mathrm r})$ set of free nodes, an $O(\sqrt N)$ fluctuation of the count
margin that, with the balanced-start fluctuation, sets the $O(\sqrt N)$ transition width seen in
simulation. The complete graph is the deterministic $p_{\mathrm r}\to1$, $p=1$ limit, recovering
Proposition~\ref{prop:stageB}.

\section{The sparse-graph frontier via the cavity method}\label{app:sparse}
On a random $d$-regular graph $M=(d+\kappa)I-A+\diag(w_i\mathds 1[i\in R])$ is not covered by the
complete-graph closed form \eqref{eq:HKN}, but the local tree structure makes the coherence exact in the
replica-symmetric limit. The diagonal resolvent $g_i=(M^{-1})_{ii}$ obeys a cavity recursion: writing
$g_{i\to j}$ for the resolvent on the graph with edge $(i,j)$ deleted and $a_i=d+\kappa+w_i$,
\[
  g_{i\to j}=\frac{1}{a_i-\sum_{k\in\partial i\setminus j}g_{k\to i}},
\]
the off-diagonal $-1$ entries squaring to $+1$ in the Schur complement. For a fraction $\rho=k/N$ of nodes
pinned at strength $w$, the cavity resolvent converges to the distributional fixed point
\[
  g\ \stackrel{d}{=}\ \frac{1}{a-\sum_{l=1}^{d-1}g_l},\qquad
  a=\begin{cases}d+\kappa+w&\text{w.p. }\rho,\\[2pt] d+\kappa&\text{w.p. }1-\rho,\end{cases}
\]
with $g_l$ i.i.d.\ copies, and the per-node coherence is
$h(\rho)=H/N=\E\big[(a-\sum_{l=1}^{d}g_l)^{-1}\big]$. Solved by population dynamics, $h(\rho)$ matches the
direct $\tr M^{-1}/N$ on finite random $d$-regular graphs to under $0.5\%$ (at $d=6$, $\kappa=1$, $w=8$:
$h(0)=0.166$, $h(0.3)=0.134$). Inverting $h(\rho)=\varepsilon/N$ gives the minimal pinned fraction
$\rho^\star$, hence the frontier $B^\star=\rho^\star N\,c$. It is $N$-independent: the exact
replica-symmetric sparse-graph analogue of \eqref{eq:HKN}.

\section{Structured sparse graphs: the stochastic block model cavity}\label{app:sbm}
\ref{app:sparse} gives the exact coherence on the random $d$-regular ensemble. Real agent
topologies cluster into communities (teams, tool groups, debate sub-panels) wired densely inside and sparsely
across. We extend the cavity to the \emph{stochastic block model} (SBM). Fix $K$ communities of sizes
$N_\ell$; every node in community $\ell$ has $d_{in}$ intra-community and $d_{out}$ inter-community
neighbours ($d_{in},d_{out}=O(1)$, symmetric mixing), total degree $d=d_{in}+d_{out}$, and a fraction
$\rho_\ell$ of pinned oracles of strength $w$, so $M(R)_{ii}=d+\kappa+w\,\mathds 1[i\in R]$.

A cavity message must remember both its source community and whether it crosses a boundary. Let $g^{in}_\ell$,
$g^{out}_\ell$ be the cavity variances on intra- and inter-community directed edges with source in $\ell$,
and $a_\ell=d+\kappa+w\,B_\ell$ with $B_\ell\sim\mathrm{Bernoulli}(\rho_\ell)$. The leave-one-out elimination
gives the coupled fixed point
\begin{equation}\label{eq:sbm-cav}
  g^{in}_\ell\overset{d}{=}\Big(a_\ell-\!\sum_{r=1}^{d_{in}-1} g^{in}_{\ell,r}-\!\sum_{s=1}^{d_{out}} g^{out}_{\ell'_s,s}\Big)^{-1},
  \quad
  g^{out}_\ell\overset{d}{=}\Big(a_\ell-\!\sum_{r=1}^{d_{in}} g^{in}_{\ell,r}-\!\sum_{s=1}^{d_{out}-1} g^{out}_{\ell'_s,s}\Big)^{-1},
\end{equation}
with i.i.d.\ copies, each $\ell'_s$ drawn uniformly from the other communities. The block coherence is the node
marginal $h_\ell(\rho_\ell)=\E\big[(a_\ell-\sum_{r=1}^{d_{in}} g^{in}_{\ell,r}-\sum_{s=1}^{d_{out}} g^{out}_{\ell'_s,s})^{-1}\big]$,
and since the trace splits over communities,
\begin{equation}\label{eq:sbm-Htot}
  \frac{H(R)}{N}=\frac{\tr M(R)^{-1}}{N}=\sum_{\ell=1}^{K}\frac{N_\ell}{N}\,h_\ell(\rho_\ell).
\end{equation}
Two limits check out. When $d_{out}=d_{in}$ (or $K=1$) both populations coincide and \eqref{eq:sbm-cav}
collapses to the scalar random-$d$-regular cavity of \ref{app:sparse}. As $d_{out}\to0$ the blocks
decouple and \eqref{eq:sbm-Htot} reduces to the additive per-community covering picture of
Proposition~\ref{prop:modular}, with the concave-$w$ spreading law of Theorem~\ref{thm:curv} applied
\emph{within} each block; $d_{out}$ interpolates between one homogeneous swarm and $K$ independent ones.

We solve \eqref{eq:sbm-cav} by population dynamics and compare against the direct block average of
$(M(R)^{-1})_{ii}$ on finite SBMs ($N=900$--$1200$, $5$--$6$ realizations). The agreement is below $1\%$. For
$K{=}3$, $d_{in}{=}6$, $d_{out}{=}2$, $\rho=(0.1,0.4,0.7)$, $w{=}3$, $\kappa{=}0.5$, the block $h$ is
$(0.130,0.117,0.105)$ direct versus $(0.129,0.117,0.105)$ cavity, and the total $H/N$ is $0.1177$ versus
$0.1172$ ($0.4\%$). The gap halves as $N$ doubles ($0.83\%,0.42\%,0.23\%$ at $N=450,900,1500$), so the finite
graph converges to the cavity value. The frontier thus extends exactly, in the replica-symmetric
locally-tree-like regime, from the homogeneous ensemble to community-structured sparse graphs.

\paragraph{General asymmetric mixing.}
The symmetric ensemble above is the case $d_{\ell\ell}=d_{\mathrm{in}}$, $d_{\ell\ell'}=d_{\mathrm{out}}/(K-1)$
of a general \emph{degree matrix} $d_{\ell\ell'}\ge0$, the mean number of neighbours a block-$\ell$ node has
in block $\ell'$; blocks may also carry distinct $\rho_\ell$ and strengths $w_\ell$. A cavity message must now
remember its \emph{ordered} endpoints: let $g_{\ell\to\ell'}$ be the cavity resolvent on a directed edge from
a block-$\ell$ node toward a block-$\ell'$ node (one population per ordered pair, $K^2$ in all). With
\begin{equation}\label{eq:asbm-a}
  a_\ell=\kappa+\sum_{\ell'=1}^{K} d_{\ell\ell'}+w_\ell\,B_\ell,\qquad B_\ell\sim\mathrm{Bernoulli}(\rho_\ell),
\end{equation}
the leave-one-out elimination (the message $\ell\to\ell'$ keeps every incoming neighbour of its source node
except the single $\ell'$-neighbour it answers) gives
\begin{equation}\label{eq:asbm-cav}
  g_{\ell\to\ell'}\overset{d}{=}\Big(a_\ell-\sum_{p=1}^{K}\ \sum_{r=1}^{\,d_{\ell p}-\delta_{p\ell'}} g^{(r)}_{p\to\ell}\Big)^{-1},
\end{equation}
with independent copies from population $g_{p\to\ell}$. The block coherence sums over all incoming neighbours,
\begin{equation}\label{eq:asbm-h}
  h_\ell=\E\Big[\Big(a_\ell-\sum_{p=1}^{K}\ \sum_{r=1}^{\,d_{\ell p}} g^{(r)}_{p\to\ell}\Big)^{-1}\Big],
  \qquad \frac{H(R)}{N}=\sum_{\ell=1}^{K}\frac{N_\ell}{N}\,h_\ell .
\end{equation}
Setting $d_{\ell\ell}=d_{\mathrm{in}}$, $d_{\ell\ell'}=d_{\mathrm{out}}/(K-1)$ collapses the $K-1$
cross-populations with a common target into a single $g^{\mathrm{out}}_\ell$ and recovers
\eqref{eq:sbm-cav}--\eqref{eq:sbm-Htot} exactly. (These block equations sum a node's feedback over all of its
incoming neighbours, which is exact precisely because the support is symmetric; the genuinely directed case is
treated in \ref{app:directed}.) Population dynamics on the $K^2$ populations again matches the
direct block average below $1\%$: for the genuinely asymmetric
$d=\big(\begin{smallmatrix}6&2&1\\4&5&3\\2&3&7\end{smallmatrix}\big)$ with $N_\ell=(600,300,300)$,
$\rho=(0.2,0.5,0.3)$, $w=(3,2,5)$, $\kappa=0.4$, the cavity gives $H/N=0.096$ versus $0.097$ direct
($0.6\%$). The frontier is thus exact, replica-symmetric, for arbitrary $K\times K$ community mixing.

Equations~\eqref{eq:asbm-cav}--\eqref{eq:asbm-h} are stated for the \emph{symmetric-support} ensemble, where
every edge carries influence in both directions (the weights may differ, but $i$ feeds $j$ iff $j$ feeds $i$).
A message then sums its source's incoming neighbours and every incoming neighbour is also an outgoing one.
A genuinely \emph{directed} swarm, where influence can be one-way, is a different object: $M$ becomes
non-symmetric and, as \ref{app:directed} shows, the feedback must be restricted to reciprocated edges
rather than all incoming ones.

\section{Directed swarms: only reciprocated trust carries feedback}\label{app:directed}
So far every edge has been two-way. Real influence is often one-way: an agent may read a tool's output without
the tool reading it back, or a junior agent may copy a senior one it never affects. Model this with a directed
influence graph, $A_{ij}=1$ when agent $j$ feeds agent $i$ ($j\to i$), and the grounded \emph{row}-Laplacian
\begin{equation}\label{eq:dir-M}
  M \;=\; D_{\mathrm{in}} - A + \kappa I + W_R,\qquad
  M_{ii}=\kappa+d^{\mathrm{in}}_i+w\,\mathds 1[i\in R],\quad M_{ij}=-A_{ij}\ (i\neq j),
\end{equation}
with $D_{\mathrm{in}}=\diag(d^{\mathrm{in}}_i)$ the in-degree. $M$ is non-symmetric as soon as some edge is
one-way, so the M-matrix symmetry of Section~\ref{sec:submod} is lost; but $M$ is still strictly
diagonally dominant ($M_{ii}-\sum_{j\neq i}|M_{ij}|=\kappa+w\,\mathds 1[i\in R]\ge\kappa>0$), hence nonsingular with a well-defined coherence
$H=\tr M^{-1}$. Write $\mathrm{recip}(i)=\{j: A_{ij}A_{ji}=1\}$ for the neighbours $i$ both feeds and is fed by.

\begin{proposition}[Directed cavity]\label{prop:directed}
On a locally tree-like directed graph the diagonal resolvent obeys, in the replica-symmetric limit, the
scalar cavity recursion
\begin{equation}\label{eq:dir-cav}
  g_i \;=\; \Big(a_i-\!\!\sum_{j\in\mathrm{recip}(i)}\!\! A_{ij}A_{ji}\,g^{(i)}_j\Big)^{-1},
  \qquad a_i=\kappa+d^{\mathrm{in}}_i+w\,\mathds 1[i\in R],\qquad \frac{H}{N}=\E\,g_i,
\end{equation}
where $g^{(i)}_j$ is the cavity resolvent at $j$ with $i$ removed. The feedback runs over \emph{reciprocated}
edges only: a one-way edge $j\to i$ enters exclusively through the in-degree in $a_i$ and contributes nothing
to the off-diagonal feedback.
\end{proposition}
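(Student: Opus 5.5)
The plan is to derive \eqref{eq:dir-cav} from the Schur-complement identity for a single diagonal entry of the inverse, exactly as in the undirected cavity of \ref{app:sparse}, and to track which products of off-diagonal entries survive when $M$ is non-symmetric. Partition $M$ into the row and column of node $i$ and the block $M^{(i)}$ on $V\setminus\{i\}$. The $(i,i)$ entry of the inverse is the inverse Schur complement,
\[
  g_i=(M^{-1})_{ii}=\Big(M_{ii}-\sum_{j,k\neq i}M_{ij}\,\big((M^{(i)})^{-1}\big)_{jk}\,M_{ki}\Big)^{-1}.
\]
$M^{(i)}$ is a principal submatrix of a strictly diagonally dominant matrix, so it is itself strictly diagonally dominant and invertible. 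Hence this identity holds exactly on any finite graph, and $M_{ii}=a_i$ by \eqref{eq:dir-M}.

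Next I will impose the two features of the directed setting on the sum. First, $M_{ij}=-A_{ij}$ is nonzero only if $j$ feeds $i$, and $M_{ki}=-A_{ki}$ is nonzero only if $i$ feeds $k$. So the sum runs over pairs with $j\in\mathrm{in}(i)$ and $k\in\mathrm{out}(i)$, with coefficient $A_{ij}A_{ki}$, the two minus signs cancelling as in the undirected case. Second, I will use local tree-likeness. After $i$ is removed, the neighbourhoods of distinct neighbours of $i$ lie in different branches of the local tree, up to cycles of length $O(\log N)$. So $((M^{(i)})^{-1})_{jk}$ for $j\neq k$ is a walk sum of $M^{(i)}$ between two distinct branches, and it vanishes in the tree limit. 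Only the diagonal terms $j=k$ survive, and they require $j\in\mathrm{in}(i)\cap\mathrm{out}(i)=\mathrm{recip}(i)$. The coefficient becomes $A_{ij}A_{ji}$ and the surviving factor is $g^{(i)}_j:=((M^{(i)})^{-1})_{jj}$. This gives \eqref{eq:dir-cav}. In the same computation, a one-way edge $j\to i$ enters only through the term $d^{\mathrm{in}}_i$ inside $a_i$ and never through the feedback. Finally, $H/N=\frac1N\sum_i g_i=\E g_i$ under the uniform node measure. Closing the recursion in $g^{(i)}_j$ uses the same identity applied to $M^{(i)}$ at $j$, with $i$ deleted, which gives the message-passing form used in \ref{app:sparse}.

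I expect the main obstacle to be justifying that the cross-branch entries $((M^{(i)})^{-1})_{jk}$ really vanish, together with the corrections from short cycles. With $M$ non-symmetric I cannot use the spectral resolvent bounds that apply in the symmetric case. I will instead use a Neumann series. Write $M^{(i)}=\Lambda-A^{(i)}$ with diagonal $\Lambda_{jj}\ge d^{\mathrm{in}}_j+\kappa$. Then $\|\Lambda^{-1}A^{(i)}\|_\infty\le\max_j d^{\mathrm{in}}_j/(d^{\mathrm{in}}_j+\kappa)<1$, so $(M^{(i)})^{-1}=\sum_n(\Lambda^{-1}A^{(i)})^n\Lambda^{-1}$ converges geometrically in the row-sum norm. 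An entry $(j,k)$ with $j\neq k$ in different branches collects only walks of length at least the girth-type distance between the branches, which grows as $\log N$ on locally tree-like graphs. Its total contribution is therefore $O(r^{c\log N})$ for some $r<1$, which is $o(1)$, uniformly in $i$ outside a vanishing fraction of nodes near short cycles. The uniform diagonal dominance margin $\kappa$ is what replaces symmetry here. I will state the result in the replica-symmetric sense, meaning that the empirical average of $g_i$ matches the fixed point up to $o(1)$. I will not claim rigour for uniqueness of the distributional fixed point, mirroring how \ref{app:sparse} is stated.
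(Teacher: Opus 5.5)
Your proposal is correct and is essentially the paper's proof. Both arguments take the Schur complement at $i$, kill the cross entries $[(M^{(i)})^{-1}]_{jk}$ for $j\neq k$ because $i$ is a cut vertex of the locally tree-like support, and read off the surviving coefficient $M_{ij}M_{ji}=A_{ij}A_{ji}$ to restrict the feedback to $\mathrm{recip}(i)$. Your Neumann-series control of near-tree corrections goes beyond the paper, which uses the exact cut-vertex zero and only remarks that short cycles are rare; note that treating $r=\max_j d^{\mathrm{in}}_j/(d^{\mathrm{in}}_j+\kappa)$ as a fixed constant below $1$ presumes bounded in-degree, so for ensembles like directed Erd\H{o}s--R\'enyi that estimate would need refining.
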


\begin{proof}
By the Schur complement, $[M^{-1}]_{ii}=\big(M_{ii}-r_i^{\!\top}(M^{(i)})^{-1}c_i\big)^{-1}$. Here $M^{(i)}$
is the minor with row and column $i$ deleted, $r_i$ is the off-diagonal row of $M$ (its entries $M_{ij}$ are
the feeders $j\to i$), and $c_i$ is the off-diagonal column (its entries $M_{ji}$ are the targets $i\to j$). The bilinear term is $\sum_{j,k\neq i}M_{ij}\,[(M^{(i)})^{-1}]_{jk}\,M_{ki}$. On a graph whose
underlying undirected support is locally a tree, $i$ is a cut vertex, so two distinct neighbours $j\neq k$ lie
in different components of $M^{(i)}$ and $[(M^{(i)})^{-1}]_{jk}=0$; only $j=k$ survives, giving
$\sum_{j}M_{ij}M_{ji}\,[(M^{(i)})^{-1}]_{jj}$. The coefficient $M_{ij}M_{ji}=A_{ij}A_{ji}$ is nonzero exactly
when $j\in\mathrm{recip}(i)$, and $[(M^{(i)})^{-1}]_{jj}=g^{(i)}_j$. A one-way edge has $A_{ij}A_{ji}=0$ and drops
out. A $4$-node symbolic check confirms $\partial[M^{-1}]_{ii}/\partial a_k=0$ for every one-way neighbour $k$.
\end{proof}

The reading is sharp: \emph{mutual} trust binds a swarm, one-way authority does not. A corrector that broadcasts
to many agents but listens to none influences their answers, yet its own coherence and the trace it contributes
are set solely by its diagonal: it is a cut vertex with no return path, so no feedback loop forms through it.
Equation~\eqref{eq:dir-cav} specializes the cavity method for sparse non-Hermitian matrices~\cite{NeriMetz} to
the diagonal of the resolvent; what is specific here is that on a tree-like directed graph the coherence
feedback is carried only by reciprocal $2$-cycles.
This also corrects the naive extrapolation of the block form
\eqref{eq:asbm-cav}: summing feedback over \emph{all} incoming neighbours holds only under symmetric support, and even then the reciprocated-only reduction is the tree-level form.

\begin{figure}[t]
  \centering
  \includegraphics[width=0.52\linewidth]{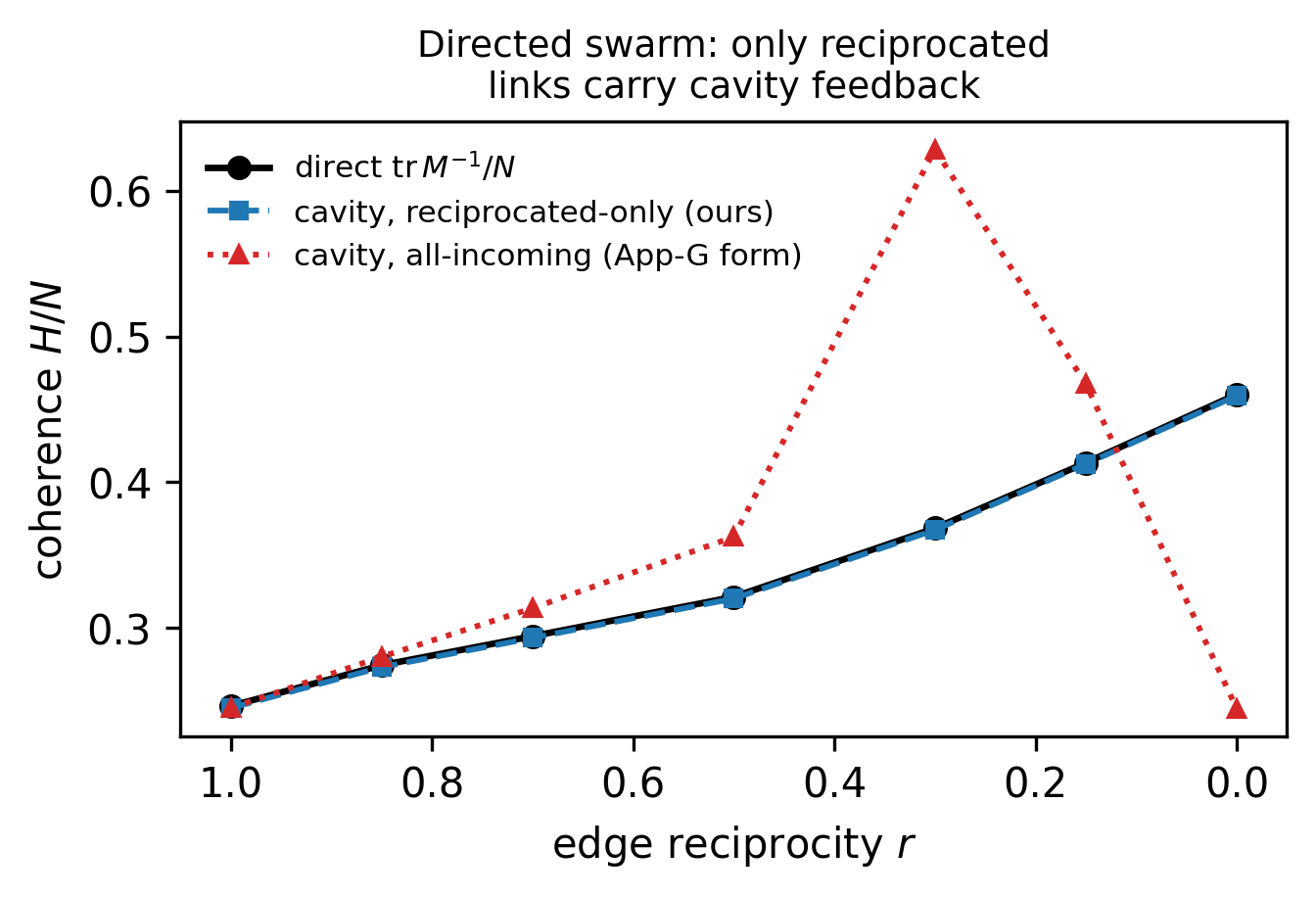}
  \caption{Coherence of a directed swarm as edges turn one-way (reciprocity $r$ swept from fully two-way, right,
  to fully directed, left; sparse graph, $N{=}400$, mean degree $5$, $\kappa{=}0.5$, $w{=}3$, $20\%$ pinned).
  The reciprocated-only cavity \eqref{eq:dir-cav} tracks the exact $\tr M^{-1}/N$ within $0.6\%$ across the whole
  range, including the fully directed graph. Summing feedback over all incoming edges (the block-form
  extrapolation) drifts badly once the graph is directed, by up to ${\sim}70\%$. One-way influence raises the
  coherence cost: removing return paths removes the feedback loops that bind the swarm.}
  \label{fig:directed}
\end{figure}

Solving \eqref{eq:dir-cav} by single-instance message passing on directed Erd\H{o}s--R\'enyi graphs and
comparing against the direct $\tr M^{-1}$ over a reciprocity sweep (Fig.~\ref{fig:directed}) gives agreement
below $0.6\%$ from the two-way graph all the way to the fully directed one, whereas the all-incoming form
drifts by up to ${\sim}70\%$ (worst near $r{\approx}0.3$, and ${\sim}47\%$ at the fully directed graph). The
reciprocated-only cavity is thus the replica-symmetric, tree-level fixed point: it is exact when the underlying
undirected support is locally tree-like, and short directed cycles break the cut-vertex step per instance but
are $O(1/N)$-rare in sparse graphs, so the $N$-averaged trace still matches to within $0.6\%$ even fully
directed. The submodular
placement guarantees of Section~\ref{sec:submod}, which use the symmetry of $M$, do not transfer to the directed
regime and are left open there.

\end{document}